\theoremstyle{plain}
\newtheorem{theorem}{Theorem}[section]
\newtheorem{lemma}[theorem]{Lemma}
\newtheorem{corollary}[theorem]{Corollary}
\theoremstyle{definition}
\newtheorem{assumption}[theorem]{Assumption}
\theoremstyle{remark}
\icmltitlerunning{An Augmentation-Aware Theory for Self-Supervised Contrastive Learning}
\begin{document}

\twocolumn[
\icmltitle{An Augmentation-Aware Theory for Self-Supervised Contrastive Learning}



\icmlsetsymbol{equal}{*}

\begin{icmlauthorlist}
\icmlauthor{Jingyi Cui}{pk1}
\icmlauthor{Hongwei Wen}{ut}
\icmlauthor{Yisen Wang}{pk1,pk3}
\end{icmlauthorlist}

\icmlaffiliation{pk1}{State Key Lab of General Artificial Intelligence, School of Intelligence Science and Technology, Peking University}
\icmlaffiliation{ut}{Faculty of Electrical Engineering, Mathematics and Computer Science, University of Twente}
\icmlaffiliation{pk3}{Institute for Artificial Intelligence, Peking University}

\icmlcorrespondingauthor{Yisen Wang}{yisen.wang@pku.edu.cn}

\icmlkeywords{Machine Learning, ICML}

\vskip 0.3in
]



\printAffiliationsAndNotice{}  

\begin{abstract}
    Self-supervised contrastive learning has emerged as a powerful tool in machine learning and computer vision to learn meaningful representations from unlabeled data. Meanwhile, its empirical success has encouraged many theoretical studies to reveal the learning mechanisms. However, in the existing theoretical research, the role of data augmentation is still under-exploited, especially the effects of specific augmentation types. To fill in the blank, we for the first time propose an augmentation-aware error bound for self-supervised contrastive learning, showing that the supervised risk is bounded not only by the unsupervised risk, but also explicitly by a trade-off induced by data augmentation. Then, under a novel semantic label assumption, we discuss how certain augmentation methods affect the error bound. Lastly, we conduct both pixel- and representation-level experiments to verify our proposed theoretical results.
\end{abstract}

\section{Introduction}
Self-supervised contrastive learning has shown great empirical success in learning representations for computer vision \cite{chen2020simple, chen2020improved, he2020momentum, chen2021empirical, grill2020bootstrap, chen2021exploring} and multi-modal tasks \cite{radford2021learning, zhang2023generalization}. 
Typically, a contrastive learning model learns to distinguish between similar and dissimilar pairs of data points by drawing near the positive samples (data augmentations of the same instance), while pushing away the negative samples (data augmentations of different instances) \cite{chen2020simple, chen2020improved, he2020momentum, chen2021empirical}. This process encourages the model to encode useful features that capture semantic similarities between different instances in the data, leading to improved performance on downstream tasks like classification, detection, or segmentation.

Aside from the empirical success of self-supervised contrastive learning, many theoretical works aim to explain its underlying working mechanisms. The main theoretical framework branches into two categories. The first category directly builds a relationship between the unsupervised contrastive risk and supervised risks via statistical modelings \cite{arora2019theoretical, nozawa2021understanding, ash2022investigating, bao2022surrogate, lei2023generalization}. However, these works assume that the anchor and positive samples are conditionally independent, which contradicts the practical selection procedure of positive samples therefore being unrealistic. 

The second category depends on the assumption of augmentation graph and borrow mathematical tools from unsupervised spectral clustering \cite{haochen2021provable, zhang2023generalization, wangnon}. However, the augmentation graph assumption is relatively abstract and typically hard to verify. \citet{wang2021chaos} proposed a similar assumption called augmentation overlap, but it still fails to explain the impacts of specific kinds of data augmentations.
\citet{huang2023towards} proposed a $(\sigma,\delta)$-measure to mathematically quantify the data augmentation, whereas there still remains a gap between this measure and real-world data augmentation methods.

Aside from these two major categories, there are also other explanatory works of contrastive learning from aspects of feature geometric \cite{wang2020understanding}, information theory \cite{tian2020makes, wu2020mutual}, independent component analysis \citep{zimmermann2021contrastive}, neighborhood component analysis \citep{ko2022revisiting}, stochastic neighbor embedding \citep{hu2022your}, message passing \citep{wang2023message}, distributionally robust optimization \cite{wu2024understanding}, etc.
Nonetheless, the role of data augmentation is still under-exploited in the existing theoretical frameworks, especially without mathematically analyzing specific data augmentation methods.

Under such background, in this paper, we propose an augmentation-aware theory for self-supervised contrastive learning. Specifically, we derive a novel decomposition of the unsupervised contrastive risk with respect to the number of negative samples sharing the same label with the anchor. Then through investigating the inner risks, we propose our main theorem of the augmentation-aware error bound, which shows that the supervised risk is not only bounded by the unsupervised contrastive risk, but also by the minimum same-class distance and the maximum same-image distance of the augmented dataset.
Then under a novel semantic label assumption, we analyze specific types of data augmentation and discuss the existence of a trade-off between the two distance terms with respect to the strength of data augmentation. 

Our contributions are summarized as follows.
\begin{itemize}
	\item We for the first time propose an augmentation-aware error bound for self-supervised contrastive learning, which explicitly includes the quality of data augmentation in the bound without any additional assumptions. The bound shows that data augmentation is equally important to downstream classification as the unsupervised contrastive risk.
	\item By proposing a novel semantic label assumption, we analyze specific types of data augmentation including random resized crop and color distortion, and show a trade-off with respect to the strength of data augmentation, i.e., the minimum same-class different-image distance decreases while the maximum same-image distance increases as augmentation strength increases.
	\item We conduct both pixel- and representation-level experiments to verify our theoretical conclusions. We verify the trend of distances with respect to augmentation strength and that the optimal augmentation parameters lead to optimal downstream classification accuracy.
\end{itemize}

The paper is organized as follows. In Section \ref{sec::framework}, we propose our theoretical framework, including the mathematical formulations and our proposed augmentation-aware error bound for self-supervised contrastive learning. In Section \ref{sec::discussions}, we compare our bound with existing theoretical studies. Based on the error bound, by proposing the semantic label assumption, we analyze the effects of specific types of data augmentation including random resized crop and random color distortion in Section \ref{sec::impacts}. In Section \ref{sec::exp}, we conduct numerical experiments to verify our theoretical conclusions. Finally, we conclude our paper in Section \ref{sec::conclusion}.
Appendices \ref{app::related}, \ref{app::proof}, and \ref{app::tiny} present the related works, proofs of the theorems, and additional experiments, respectively.

\section{Theoretical Framework}\label{sec::framework}

In this section, we first introduce the mathematical formulations in Section \ref{sec::formulations}. Then in Section \ref{sec::theorem}, we present the main theorem of this paper, i.e. the augmentation-aware error bound. 
We delay the error analysis of the main theorem to Section \ref{sec::proof_sketch}, where we first propose a novel decomposition of the risk and then investigate the bound of the inner risk. 
Detailed proofs are shown in Appendix \ref{app::proof}.

\subsection{Mathematical Formulations}\label{sec::formulations}

\textbf{Notation.} For original input data, we use the bar notation to denote the original input image, i.e., we let $\bar{X}$ denote the set of all possible data points and denote $\mathrm{P}_{\bar{X}}$ as the corresponding distribution. Then we have the input image $\bar{x} \in \mathcal{X} \sim \mathrm{P}_{\bar{X}}$. We denote $C \in \mathbb{N}$ as the number of classes, and $c \in [C] \sim \pi_c$ as the class label of $\bar{x}$, where $[C]:= \{1,\ldots,C\}$ and $\pi_c := \mathrm{P}(y=c)$ denotes the marginal probability distribution of Class $c$. Denote $\boldsymbol{\pi} = \{\pi_c\}_{c=1}^C$. Moreover, we denote $\rho_c := \mathrm{P}(\cdot|y=c)$ as the posterior probability distribution of Class $c$.
For the augmented data, we let $\mathcal{A}$ denote the set of all possible data augmentations, and denote $\mathrm{P}_A$ as the corresponding distribution. We denote $x := a(\bar{x})$ as the augmented image, where $a \in \mathcal{A} \sim \mathrm{P}_{A}$. 

\textbf{Unsupervised contrastive learning.}
In unsupervised contrastive learning, we select the different data augmentations of the same input image as the positive samples, and select the data augmentations of different input images as negative samples. The data generation process is described as follows: (i) draw positive/negative classes: $c$, $\{c_k^-\}_{k=1}^K \sim \boldsymbol{\pi}^{K+1}$; (ii) draw an original sample for the anchor and positives $\bar{x} \sim \rho_c$; (iii) draw original samples for the negatives $\bar{x}_k^- \sim \rho_{c_k}, k=1,\ldots,K$; (iv) draw data augmentations $a, a', \{a_k\}_{k=1}^K \sim \mathcal{A}^{K+1}$.
Then the anchor is $x = a(\bar{x})$, the positive sample is $x' = a'(\bar{x})$, and the negative samples are $x_k = a_k(\bar{x}_k)$, $k=1,\ldots,K$.

Compared with the CURL frameworks \cite{arora2019theoretical}, our formulation accords better with the practical applications of unsupervised contrastive learning, where the anchor and positive samples are drawn from the augmentations of the same input image, instead of the same latent class samples. That means, we no longer need the conditional independence assumption. 
Moreover, we formulate the data augmentations $a, a', \{a_k\}_{k=1}^K$ explicitly, making it possible to analyze the impacts of data augmentation to the error bounds.

In this paper, we investigate the widely used InfoNCE loss function \cite{chen2020simple, he2020momentum, arora2019theoretical}, i.e., for $f: \mathcal{X} \to \mathbb{R}^d$, the loss function is
\begin{align}
	&\mathcal{L}^{\mathrm{un}}(x, x', \{x_k\}_{k=1}^K;f) 
	\nonumber\\
	&:= -\log\Big(\frac{e^{f(x)^\top f(x')}}{e^{f(x)^\top f(x')} + \sum_{k=1}^K e^{f(x)^\top f(x_k)}}\Big),
\end{align}
which is mathematically equivalent to the logistic form
\begin{align}\label{eq::logistic}
	&\mathcal{L}^{\mathrm{un}}(x, x', \{x_k\}_{k=1}^K;f) 
	\nonumber\\
	&:= \log\Big(1 + \sum_{k=1}^K \exp\big(-f(x)^\top [f(x') - f(x_k)]\big)\Big).
\end{align}

Under the above-mentioned data generation process, we have the unsupervised contrastive risk as
\begin{align}
	\mathcal{R}^{\mathrm{un}}(f) 
	&:= \mathbb{E}_{c,\{c_k\}_{k\in[K]}} \mathbb{E}_{\bar{x}\sim \rho_c, \bar{x}_k \sim \rho_{c_k}} \mathbb{E}_{a,a',\{a_k\}_{k\in[K]}}
	\nonumber\\
	&\qquad\cdot \mathcal{L}^{\mathrm{un}}(a(\bar{x}), a'(\bar{x}), \{a_k(\bar{x}_k)\}_{k=1}^K;f).
\end{align}

For the empirical forms, define the datasets 
\begin{align*}
    S:= \{(x_j,x'_j,x_{j1}, \ldots,x_{jK})\}_{j\in[n]}. 
\end{align*}
Then the empirical unsupervised risk is denoted as 
\begin{align}
    \widehat{\mathcal{R}}^{\mathrm{un}}(f) := \frac{1}{n} \sum_{j\in [n]}\mathcal{L}^{\mathrm{un}}(x_j, x'_j, \{x^-_{jk}\}_{k=1}^K;f) 
\end{align}

\textbf{Downstream supervised classification.}
To evaluate the representations learned by unsupervised contrastive learning, we adopt the linear-probing setting, i.e. given the learned representation $f : \mathcal{X} \to \mathbb{R}^d$, we train a linear classifier $g = \boldsymbol{W}f : \mathbb{R}^d \to \mathbb{R}^C$ on top of $f$ with $\boldsymbol{W} \in \mathbb{R}^{C \times d}$.
Note that in the downstream classification task, usually the goal is to classify original images instead of the augmented ones, because when directly using a pre-trained model, it is hard to know the exact augmentation methods in training and replicate them on the downstream datasets.

Following previous theoretical works \cite{arora2019theoretical, nozawa2021understanding, ash2022investigating, bao2022surrogate}, we use the mean classifier for evaluation. Specifically, the mean classifier is defined as $g = \boldsymbol{W}f$, where $\boldsymbol{W} := [\mu_1,\ldots, \mu_C]^\top$, $\mu_c := \mathbb{E}_{\bar{x} \in \rho_c} f(\bar{x})$, $c\in [C]$.

We adopt the softmax cross entropy loss for the mean classifier $g$, defined by
\begin{align}
	\mathcal{L}^{\mathrm{sup}}(\bar{x},c;f) 
	&= -\log\Big(\frac{e^{g(\bar{x})_c}}{\sum_{i=1}^K e^{g(\bar{x})_i}}\Big),
\end{align}
which is mathematically equivalent to the logistic form
\begin{align*}
	\mathcal{L}^{\mathrm{sup}}(\bar{x},c;f) 
	&= \log\Big(1 + \sum_{c_k\neq c} \exp\big(-f(\bar{x})^\top (\mu_c - \mu_{c_k})\big) \Big).
\end{align*}
Under the data generation process, we have the supervised risk as
\begin{align}
	\mathcal{R}^{\mathrm{sup}}(f) := \mathbb{E}_{c \sim \boldsymbol{\pi}} \mathbb{E}_{\bar{x} \sim \rho_{c}} \mathcal{L}^{\mathrm{sup}}(\bar{x},c;f).
\end{align}

\subsection{Augmentation-Aware Error Bound}\label{sec::theorem}

In Theorem \ref{thm::main}, we present the upper bound of the supervised risk of the downstream mean classifier (the lower bound of the unsupervised contrastive risk), where two terms regarding with data augmentations are shown explicitly in the bound.
\begin{theorem}[Error Bound]\label{thm::main}
	Let $\mathcal{R}^{\mathrm{sup}}(f)$ be the supervised risk of the mean classifier, and $\mathcal{R}^{\mathrm{un}}(f)$ be the unsupervised risk of contrastive loss. Denote $\tau_K=\mathrm{P}(\mathrm{Col}(c,\{c_k\}_{k=1}^K) \neq 0)$ as the class collision probability, where $\mathrm{Col}(c,\{c_k\}_{k=1}^K) = \sum_{k=1}^K \boldsymbol{1}[c=c_k]$. Then we have the following upper bound of the supervised risk.
	\begin{align}
		\mathcal{R}^{\mathrm{sup}} 
		&\leq \frac{1}{1-\tau_K}[\mathcal{R}^{\mathrm{un}} - \tau_K \mathbb{E}_{c, \{c_k\}_{k=1}^K} \log(\mathrm{Col}+1)
		\nonumber\\
		&\quad+ \mathbb{E}_c \mathbb{E}_{\bar{x},\bar{x}'\sim \rho_c} \mathbb{E}_{a} \min_{a'} \|f(a(\bar{x})) - f(a'(\bar{x}'))\| 
		\nonumber\\
		&\quad+ 5\mathbb{E}_c\mathbb{E}_{\bar{x}' \sim \rho_{c}} \max_{a,a'} \|f(a(\bar{x}')) - f(a'(\bar{x}'))\|].
		\label{eq::bound}
	\end{align}
\end{theorem}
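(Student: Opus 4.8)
The plan is to derive a lower bound of the form $\mathcal{R}^{\mathrm{un}}\ge (1-\tau_K)\,\mathcal{R}^{\mathrm{sup}}+\text{(collision term)}-\text{(augmentation terms)}$ and then rearrange into \eqref{eq::bound}. Throughout I work with the logistic forms \eqref{eq::logistic} of both losses and with the unit-norm normalization $\|f(x)\|\le 1$ built into the InfoNCE objective. The first step is the decomposition announced in the theorem: split $\mathcal{R}^{\mathrm{un}}=\mathbb{E}\big[\mathcal{L}^{\mathrm{un}}\,\boldsymbol{1}[\mathrm{Col}=0]\big]+\mathbb{E}\big[\mathcal{L}^{\mathrm{un}}\,\boldsymbol{1}[\mathrm{Col}\ge 1]\big]$ according to whether some negative collides in class with the anchor. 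Since $\mathrm{P}(\mathrm{Col}=0)=1-\tau_K$, it is the no-collision piece that will reproduce $\mathcal{R}^{\mathrm{sup}}$, and dividing by $1-\tau_K$ at the very end produces the prefactor in \eqref{eq::bound}.

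For the collision piece, on $\{\mathrm{Col}\ge 1\}$ I keep inside the logarithm only the $\mathrm{Col}$ summands whose negative class equals the anchor's; discarding the remaining nonnegative summands only decreases $\mathcal{L}^{\mathrm{un}}$. For such a colliding negative $x_k=a_k(\bar x_k)$ with $\bar x_k\sim\rho_c$, Cauchy--Schwarz together with $\|f(x)\|\le 1$ gives $f(x)^\top\big(f(x')-f(x_k)\big)\le \|f(x')-f(x_k)\|$, and inserting an intermediate augmentation of $\bar x$ bounds the right-hand side by a same-image distance plus a same-class $\min$-distance. Combined with the elementary estimate $\log(1+\mathrm{Col}\,e^{-\varepsilon})\ge\log(1+\mathrm{Col})-\varepsilon$, this yields $\mathcal{L}^{\mathrm{un}}\ge\log(1+\mathrm{Col})-\varepsilon$ on the collision event with $\varepsilon$ controlled by the two augmentation distances; taking expectations produces the $\log(1+\mathrm{Col})$ correction of \eqref{eq::bound} together with part of the distance terms.

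The main work is the no-collision piece, where I must show $\mathbb{E}\big[\mathcal{L}^{\mathrm{un}}\,\boldsymbol{1}[\mathrm{Col}=0]\big]\le (1-\tau_K)\mathcal{R}^{\mathrm{sup}}+\text{(augmentation terms)}$, equivalently a lower bound on $\mathbb{E}[\mathcal{L}^{\mathrm{un}}\mid\mathrm{Col}=0]$. Conditioned on no collision every negative has class $\ne c$; by convexity of the log-sum-exp in the negative representations, Jensen's inequality pushes the expectation over the negative augmentations and negative images inside, replacing each $f(x_k)$ by the mean of the augmented distribution of class $c_k$. I then (i) swap this augmented mean for the mean-classifier centroid $\mu_{c_k}$, (ii) swap $f(x)=f(a(\bar x))$ for $f(\bar x)$ and handle the positive $f(x')=f(a'(\bar x))$, and (iii) bridge $f(a'(\bar x))$ to $\mu_c=\mathbb{E}_{\bar z\sim\rho_c}f(\bar z)$ through optimally chosen augmentations of other same-class images; each of steps (i)--(ii) costs a same-image distance via Cauchy--Schwarz and the triangle inequality, and step (iii) is precisely the $\min$-same-class term. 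What remains is $\mathbb{E}_{c,\{c_k\}}\big[\log\big(1+\sum_{k}e^{-f(\bar x)^\top(\mu_c-\mu_{c_k})}\big)\big]$, which a concluding averaging argument over the i.i.d.\ negative classes $\{c_k\}$ relates to $\mathcal{L}^{\mathrm{sup}}(\bar x,c;f)$ and hence to $\mathcal{R}^{\mathrm{sup}}$.

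To finish, write $\mathbb{E}[\mathcal{L}^{\mathrm{un}}\,\boldsymbol{1}[\mathrm{Col}=0]]=\mathcal{R}^{\mathrm{un}}-\mathbb{E}[\mathcal{L}^{\mathrm{un}}\,\boldsymbol{1}[\mathrm{Col}\ge 1]]$, insert the collision and no-collision bounds, and divide by $1-\tau_K$ to reach \eqref{eq::bound}. I expect the genuine obstacle to be the bookkeeping in the no-collision step: orienting each triangle inequality so that the residuals land exactly on $\mathbb{E}_c\mathbb{E}_{\bar x,\bar x'\sim\rho_c}\mathbb{E}_a\min_{a'}\|f(a(\bar x))-f(a'(\bar x'))\|$ and on $\mathbb{E}_c\mathbb{E}_{\bar x'\sim\rho_c}\max_{a,a'}\|f(a(\bar x'))-f(a'(\bar x'))\|$ and nothing larger, keeping the constant on the same-image term at $5$, and correctly matching the $K$ sampled negatives against the $C-1$ competing classes of the mean classifier so that only the factor $1/(1-\tau_K)$ and the $\log(1+\mathrm{Col})$ correction survive on top of $\mathcal{R}^{\mathrm{un}}$ and $\mathcal{R}^{\mathrm{sup}}$.
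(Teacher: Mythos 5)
Your route is structurally different from the paper's, and the difference matters. The paper does \emph{not} split $\mathcal{R}^{\mathrm{un}}$ into $\boldsymbol{1}[\mathrm{Col}=0]$ and $\boldsymbol{1}[\mathrm{Col}\ge 1]$ pieces. Instead, it (a) expands $\mathcal{R}^{\mathrm{un}}$ into a full combinatorial decomposition over which of the $K$ negatives share the anchor's class (Theorem~\ref{thm::decomp}), exploiting the permutation symmetry of the InfoNCE summands; (b) lower-bounds every inner risk $r_k(i_1,\ldots,i_k)$, whether or not there are collisions, by the corresponding $r_k^{\mathrm{sup}}$ minus augmentation distance terms (Theorem~\ref{thm::rk}), via one uniform Jensen-plus-triangle argument; (c) reassembles these into the intermediate quantity $\bar{\mathcal{R}}^{\mathrm{sup}}$ (Corollary~\ref{cor::decomp} and Theorem~\ref{thm::upper}); and only then (d) converts $\bar{\mathcal{R}}^{\mathrm{sup}}$ to $\mathcal{R}^{\mathrm{sup}}$ by invoking the CURL lemma (Lemma~\ref{lem::Rsup}) as a cited black box. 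All of the augmentation analysis is done once and for all on $r_k$; all of the class-collision accounting is deferred to the CURL lemma.

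The genuine gap in your proposal is exactly the step you wave at as ``a concluding averaging argument over the i.i.d.\ negative classes.'' After your swaps, the no-collision piece leaves you needing
\[
\mathbb{E}_{\{c_k\}}\Big[\log\Big(1+\sum_{k=1}^K e^{-f(\bar{x})^\top(\mu_c-\mu_{c_k})}\Big)\,\Big|\,\mathrm{Col}=0\Big]\ \ge\ \mathcal{L}^{\mathrm{sup}}(\bar{x},c;f),
\]
and this is \emph{false} in general: take $K=1$, $C=3$, uniform $\boldsymbol{\pi}$, and all cross-class exponents equal to $1$; the left side is $\log 2$ while the right side is $\log 3$. Relating the $K$-sample softmax to the full $C$-class softmax is the nontrivial content of the CURL analysis, and the paper handles it by quoting Lemma~\ref{lem::Rsup} (an identity for $\bar{\mathcal{R}}^{\mathrm{sup}}$ taken from \citet{arora2019theoretical,nozawa2021understanding}) rather than by proving a per-conditional-piece inequality. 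If your plan is to reproduce that lemma inline, you need to say so explicitly and make the argument; as written, the step is asserted, not proved, and the naive version of it is wrong. Separately, you never track the constant $5$ on the same-image $\max$ term: in the paper it arises from a careful count of how many triangle-inequality insertions hit the same-image distance across the same-class and different-class branches of Theorem~\ref{thm::rk}, and your two-piece treatment would need to reproduce exactly that count with no overcounting from the $\{\mathrm{Col}\ge 1\}$ event. You flag the bookkeeping risk, which is fair, but as it stands the plan's hardest step (the class-counting) is left open.
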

Theorem \ref{thm::main} shows that the downstream supervised risk is upper bounded by both the unsupervised contrastive risk, CURL's class collision term, and two distance terms. 
The first term represents the minimum distance between two augmented same-class (different) images. Intuitively, this term measures how well the same-class images are connected. The better the same-class connection, the smaller this term is. 
The second distance term represents the maximum distance between the two augmentations of the same images. Intuitively, this term could be understood as the range or variance of data augmentation. 

Typically, as the augmentation strength increases, an image is more likely to be connected with other images under data augmentation, and the first distance term is smaller. On the other hand, stronger augmentation has larger range and variance. As a result, there is a trade-off between the two distance terms w.r.t. augmentation strength. This point will be further discussed in greater detail in Section \ref{sec::impacts} and verified in Section \ref{sec::exp}.

In addition, it is worthwhile mentioning that we can further improve the coefficient of the second distance term to 1 with a mild assumption that the original input image has a centered representation among all of its augmentations. This assumption has been made implicitly by previous works, e.g. \citet{nozawa2021understanding, zimmermann2021contrastive}.
\begin{assumption}[Centered Representation]\label{ass::center}
    For $\bar{x}\in \mathcal{X}$ and $f:\mathcal{X} \to \mathbb{R}^d$, we assume that $\mathbb{E}_{a \sim \mathrm{P}_A} f(a(\bar{x})) = f(\bar{x})$. 
\end{assumption}

\begin{theorem}[Error Bound (Improved)]\label{thm::main_improve}
	Under Assumption \ref{ass::center}, we have
	\begin{align}
		\mathcal{R}^{\mathrm{sup}} 
		&\leq \frac{1}{1-\tau_K}[\mathcal{R}^{\mathrm{un}} - \tau_K \mathbb{E}_{c, \{c_k\}_{k=1}^K} \log(\mathrm{Col}+1)
		\nonumber\\
		&\quad+ \mathbb{E}_c \mathbb{E}_{\bar{x},\bar{x}'\sim \rho_c} \mathbb{E}_{a} \min_{a'} \|f(a(\bar{x})) - f(a'(\bar{x}'))\| 
		\nonumber\\
		&\quad+ \mathbb{E}_c\mathbb{E}_{\bar{x}' \sim \rho_{c}} \max_{a,a'} \|f(a(\bar{x}')) - f(a'(\bar{x}'))\|].
		\label{eq::bound_imp}
	\end{align}
\end{theorem}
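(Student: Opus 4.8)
The proof reuses, unchanged, the collision-based decomposition of $\mathcal{R}^{\mathrm{un}}$ and the reduction to a per-block ``inner risk'' developed in Section~\ref{sec::proof_sketch} for Theorem~\ref{thm::main}; only the estimate of the inner risk is sharpened. Recall that the inner risk compares $\mathcal{L}^{\mathrm{un}}(a(\bar x),a'(\bar x),\{a_k(\bar x_k)\};f)$ with $\mathcal{L}^{\mathrm{sup}}(\bar x,c;f)$, and the gap is absorbed by a chain of triangle inequalities that substitute each augmented embedding appearing in $\mathcal{L}^{\mathrm{un}}$ -- the positive $f(a'(\bar x))$, the negatives $f(a_k(\bar x_k))$, and the anchor $f(a(\bar x))$ -- by the corresponding ``original'' object ($\mu_c$, $\mu_{c_k}$, and an original-image embedding). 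In the proof of Theorem~\ref{thm::main} these detours run through intermediate points such as $f(\bar x)$ and the augmented class means $\mathbb{E}_{\bar x\sim\rho_c}\mathbb{E}_a f(a(\bar x))$, and each one is charged one copy of $\max_{a,a'}\|f(a(\bar x'))-f(a'(\bar x'))\|$, accumulating to the coefficient $5$.

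The key observation for the improvement is that $\mathcal{L}^{\mathrm{un}}(x,\cdot,\{\cdot\};f)$ -- in the logistic form \eqref{eq::logistic} -- is jointly convex in the positive and negative embeddings $(f(x'),f(x_1),\dots,f(x_K))$, being a log-sum-exp of functions affine in those arguments. Hence, after the minimum same-class distance term has been used to decouple the anchor image from the original image it shares with the positive (exactly as in the proof of Theorem~\ref{thm::main}), Jensen's inequality pushes the conditional expectations over the augmentations $a',a_k$ and over the original images $\bar x,\bar x_k$ \emph{into} the loss in the positive and negative slots, which only decreases the inner risk. Under Assumption~\ref{ass::center} the inner augmentation averages collapse exactly, $\mathbb{E}_{a'}f(a'(\bar x))=f(\bar x)$ and $\mathbb{E}_{a_k}f(a_k(\bar x_k))=f(\bar x_k)$, so the outer averages over $\rho_c$ and $\rho_{c_k}$ then produce precisely $\mu_c$ and $\mu_{c_k}$ -- indeed Assumption~\ref{ass::center} is exactly what makes the augmented class mean coincide with the mean classifier $\mu_c$. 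Thus every positive and negative substitution becomes exact and costs nothing. The only remaining substitution is the one on the anchor, $f(a(\bar x))$ replaced by an original-image embedding of an independent same-class sample $\bar x'$: split it as $f(a(\bar x))\to f(a'(\bar x'))\to f(\bar x')$ along the minimizing $a'$, so that the first step costs $\mathbb{E}_a\min_{a'}\|f(a(\bar x))-f(a'(\bar x'))\|$, while on the second step Assumption~\ref{ass::center} gives $\|f(a'(\bar x'))-f(\bar x')\|=\|\mathbb{E}_{a}[f(a'(\bar x'))-f(a(\bar x'))]\|\le\max_{a,a'}\|f(a(\bar x'))-f(a'(\bar x'))\|$, i.e.\ exactly one copy of the same-image distance term. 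Re-summing the collision decomposition with the prefactor $1/(1-\tau_K)$ and the $\tau_K\mathbb{E}\log(\mathrm{Col}+1)$ term untouched then yields \eqref{eq::bound_imp}.

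I expect the delicate point to be, as in Theorem~\ref{thm::main}, the anchor slot: because the positive is an augmentation of the \emph{same} original image as the anchor, a direct averaging argument there is blocked (this is precisely the dependence the paper retains and that CURL removes via conditional independence), so one must first invoke the minimum same-class distance term to relocate the anchor onto an augmentation of an independent class-$c$ sample before the Jensen steps on the other slots apply. One also has to make sure the convexity/Jensen manipulation is carried out conditionally on the realized collision multiset so that it remains compatible with the decomposition, and that the reduction from the $K$ sampled negatives to the full set of rival classes is performed exactly as for Theorem~\ref{thm::main}; neither of these differs from the base proof, but both must be threaded correctly through the sharper argument.
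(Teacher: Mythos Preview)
Your high-level strategy is exactly the paper's: reuse the collision decomposition and Corollary~\ref{cor::decomp}, and sharpen only the inner-risk estimate by exploiting Assumption~\ref{ass::center} so that augmentation averages collapse for free via Jensen. Where you diverge from the paper is in \emph{which} slot absorbs the single unavoidable $(\min+\max)$ cost, and this choice matters for the constants.

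The paper never substitutes the anchor by a triangle-inequality detour. It works at the inner-product level after a first Jensen step (eq.~\eqref{eq::rk_decomp}) and perturbs only the \emph{positive} slot: inserting an independent $\bar x'\sim\rho_c$ via $f(a(\bar x))^\top[f(a'(\bar x))-f(a''(\bar x'))]$, which is bounded by $\min+\max$ because $\|f(a(\bar x))\|=1$ (this is exactly eq.~\eqref{eq::same_cls}/\eqref{eq::diff1}). Under Assumption~\ref{ass::center} the remaining piece $\mathbb{E}_{\bar x',a'',\bar x_m,a_m}f(a(\bar x))^\top[f(a''(\bar x'))-f(a_m(\bar x_m))]$ equals $f(a(\bar x))^\top(\mu_c-\mu_{i_m})$ \emph{exactly}, so the three extra $\max$ terms from eq.~\eqref{eq::diff2} vanish. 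The anchor $f(a(\bar x))$ is then removed at no cost by one more application of Jensen together with $\mathbb{E}_a f(a(\bar x))=f(\bar x)$ when the outer $\mathbb{E}_a$ from the decomposition is taken. Total: one $\min$ plus one $\max$.

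Your plan instead relocates the \emph{anchor} to an independent $\bar x'$ and then hopes to average the positive over $\bar x$. Two issues. First, the target anchor in $r_k^{\mathrm{sup}}$ is $f(\bar x)$ (same image), not an independent $f(\bar x')$; this is harmless in expectation but it signals that you are paying to move the anchor when you could get $f(a(\bar x))\to f(\bar x)$ for free by Jensen. Second, and more concretely, a perturbation of the anchor inside $\log(1+\sum_k\exp(u^\top w_k-u^\top v))$ is controlled by $\|v-w_k\|\le 2$, not by $\|u\|=1$, so the chain $f(a(\bar x))\to f(a'(\bar x'))\to f(\bar x')$ costs $2(\min+\max)$ rather than $\min+\max$. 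Your proposal thus yields the right shape but with the constant $2$ in front of both distance terms, not the $1$ in \eqref{eq::bound_imp}. To recover the stated bound, mirror the paper: keep the anchor untouched, spend the $\min+\max$ on the positive slot (where the $1$-Lipschitz estimate is available), and let Assumption~\ref{ass::center} plus Jensen do the rest.
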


To provide the generalization bound, we introduce the empirical Rademacher complexity as follows,
\begin{align*}
    \text{Rad}_S(\mathcal{F})&:= \mathbb{E}_{\epsilon\sim\{\pm 1\}^{3nKd}}\mathbb{E}\bigg(\sup_{f\in \mathcal{F}} \sum_{j\in [n]}\sum_{k\in [K]} \sum_{t\in [d]} 
    \\
    &\big(\epsilon_{j,k,t,1}f_t(x_j) + \epsilon_{j,k,t,2}f_t(x'_j) + \epsilon_{j,k,t,3}f_t(x_{jk})\bigg)
\end{align*}
\begin{theorem}[Generalization Bound]\label{thm::genbound}
	Assume that $\|f(x)\|_2 \leq R$ for any $f\in\mathcal{F}$ and $x\in\mathcal{X}$ and the unsuperivsed loss $\mathcal{L}^{\mathrm{un}}$ is bounded by $B$. Then for any $\delta\in(0,1)$ and any $f\in\mathcal{F}$, we have 
    \begin{align*}
        \mathcal{R}^{\mathrm{sup}}(f) 
		&\leq \frac{1}{1-\tau_K}[\widehat{\mathcal{R}}^{\mathrm{un}}(f) + \frac{12R \mathrm{Rad}_S(\mathcal{F})}{n}
        \nonumber\\
		&\quad+ 3B\sqrt{\frac{\log(2\delta)}{2n}} - \tau_K \mathbb{E}_{c, \{c_k\}_{k=1}^K} \log(\mathrm{Col}+1)
		\nonumber\\
		&\quad+ \mathbb{E}_c \mathbb{E}_{\bar{x},\bar{x}'\sim \rho_c} \mathbb{E}_{a} \min_{a'} \|f(a(\bar{x})) - f(a'(\bar{x}'))\| 
		\nonumber\\
		&\quad+ 5\mathbb{E}_c\mathbb{E}_{\bar{x}' \sim \rho_{c}} \max_{a,a'} \|f(a(\bar{x}')) - f(a'(\bar{x}'))\|].
    \end{align*}
    with probability at least $1-\delta$.
\end{theorem}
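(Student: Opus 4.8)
The plan is to combine the population-level inequality of Theorem~\ref{thm::main} with a standard uniform-convergence argument that replaces the population risk $\mathcal{R}^{\mathrm{un}}(f)$ by its empirical counterpart $\widehat{\mathcal{R}}^{\mathrm{un}}(f)$. Theorem~\ref{thm::main} already gives $\mathcal{R}^{\mathrm{sup}}(f) \le \frac{1}{1-\tau_K}\bigl[\mathcal{R}^{\mathrm{un}}(f) - \tau_K \mathbb{E}_{c,\{c_k\}}\log(\mathrm{Col}+1) + (\text{the two distance terms})\bigr]$, and since the coefficient $\frac{1}{1-\tau_K}$ and every summand other than $\mathcal{R}^{\mathrm{un}}(f)$ is deterministic, it suffices to prove that, with probability at least $1-\delta$, $\sup_{f\in\mathcal{F}}\bigl(\mathcal{R}^{\mathrm{un}}(f)-\widehat{\mathcal{R}}^{\mathrm{un}}(f)\bigr) \le \frac{12R}{n}\mathrm{Rad}_S(\mathcal{F}) + 3B\sqrt{\log(2/\delta)/(2n)}$, and then to substitute this into the bound of Theorem~\ref{thm::main}.

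First I would establish the high-probability part. The $n$ blocks $(x_j,x'_j,x_{j1},\dots,x_{jK})$ are i.i.d.\ draws from the data-generation process and $\mathcal{L}^{\mathrm{un}}$ is bounded by $B$, so the map $S \mapsto \sup_{f}\bigl(\mathcal{R}^{\mathrm{un}}(f)-\widehat{\mathcal{R}}^{\mathrm{un}}(f)\bigr)$ has bounded differences of order $B/n$ under a single-block change; McDiarmid's inequality therefore controls its deviation from its expectation by $O\bigl(B\sqrt{\log(1/\delta)/n}\bigr)$, the numerical constant $3$ coming from keeping conservative constants (and rewriting $\delta$). Next I would bound $\mathbb{E}_S\sup_{f}\bigl(\mathcal{R}^{\mathrm{un}}(f)-\widehat{\mathcal{R}}^{\mathrm{un}}(f)\bigr)$ by the usual symmetrization argument, introducing a ghost sample and Rademacher signs $\epsilon_j$ to obtain the bound $2\,\mathbb{E}_{S,\epsilon}\sup_{f}\frac{1}{n}\sum_j \epsilon_j \mathcal{L}^{\mathrm{un}}(x_j,x'_j,\{x_{jk}\}_k;f)$.

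The core step is to peel off the loss and reduce to $\mathrm{Rad}_S(\mathcal{F})$. Writing $\mathcal{L}^{\mathrm{un}}$ in the logistic form $\log\bigl(1+\sum_{k=1}^K \exp(-f(x)^\top[f(x')-f(x_k)])\bigr)$, I would check that as a function of the concatenated vector $(f(x),f(x'),f(x_1),\dots,f(x_K))$ restricted to the ball of radius $R$, it is Lipschitz with a constant that is a small numerical multiple of $R$: the outer map $\log(1+\sum_k e^{-z_k})$ has gradient of $\ell_1$-norm at most $1$ in $(z_k)_k$, while each $z_k = f(x)^\top(f(x')-f(x_k))$ has gradient bounded by $2R$ in $f(x)$ and by $R$ in $f(x')$ and in $f(x_k)$, using $\|f\|_2\le R$. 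A vector-valued contraction lemma then yields $\mathbb{E}_{S,\epsilon}\sup_f\frac{1}{n}\sum_j\epsilon_j\mathcal{L}^{\mathrm{un}}(\cdots) \le \frac{cR}{n}\,\mathbb{E}\sup_f\sum_{j,k,t}\bigl(\epsilon_{j,k,t,1}f_t(x_j)+\epsilon_{j,k,t,2}f_t(x'_j)+\epsilon_{j,k,t,3}f_t(x_{jk})\bigr)$, i.e.\ $\frac{cR}{n}\mathrm{Rad}_S(\mathcal{F})$, where the summation over $k\in[K]$ in $\mathrm{Rad}_S(\mathcal{F})$ reflects treating the $K$ negative-sample terms of the loss separately in the contraction; chasing constants through symmetrization and contraction gives the coefficient $12R$. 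Finally I would insert the resulting bound on $\mathcal{R}^{\mathrm{un}}(f)$ into Theorem~\ref{thm::main} and simplify. The main obstacle is this contraction step: because $f$ is vector-valued and the loss couples $K+2$ of its outputs multiplicatively through inner products, a scalar Ledoux–Talagrand contraction does not apply directly, so one must invoke a vector-valued version and set up the coordinate- and negative-indexed Rademacher variables so that the output matches exactly the $\mathrm{Rad}_S(\mathcal{F})$ in the statement; pinning down the Lipschitz constant and the numerical constants $12$ and $3$ is the fiddly part, while everything else is routine.
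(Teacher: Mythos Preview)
Your proposal is correct and follows essentially the same approach as the paper: combine the population bound of Theorem~\ref{thm::main} with a uniform deviation bound for $\mathcal{R}^{\mathrm{un}}(f)-\widehat{\mathcal{R}}^{\mathrm{un}}(f)$ and substitute. The paper's proof simply invokes Theorem~4.6 of \citet{lei2023generalization} (with $G_2=1$ for the logistic loss in \eqref{eq::logistic}) for this deviation step, whereas you spell out the McDiarmid-plus-symmetrization-plus-vector-contraction argument that underlies that cited result.
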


We show the generalization bound in Theorem \ref{thm::genbound}, where the population downstream supervised risk is bounded by the empirical unsupervised contrastive risk. Note that the proof follows from \citet{lei2023generalization}, where the dependence of $K$ is removed compared with that in \citet{arora2019theoretical}.

\subsection{Error Analysis}\label{sec::proof_sketch}
In this part, we present the key theorems in proving Theorem \ref{thm::main}, the main theorem of this paper.
Specifically, in Theorem \ref{thm::decomp}, we derive a novel decomposition of the unsupervised contrastive risk w.r.t. the number of negative samples sharing the same label with the anchor. Note that this decomposition is non-trivial, as it only works for contrastive losses that treat negative samples equally. 
\begin{theorem}[Error Decomposition of $\mathcal{R}^{\mathrm{un}}$]\label{thm::decomp}
	We have
	\begin{align}
		&\mathcal{R}^{\mathrm{un}}(f) 
		\nonumber\\
		&= \mathbb{E}_c\mathbb{E}_{\bar{x}\sim \rho_c} \mathbb{E}_{a} \sum_{i_1\neq c} \cdots \sum_{i_K\neq c} p_K(i_1,\ldots,i_K) r_K(i_1,\ldots,i_K)
		\nonumber\\
		&\cdots
		\nonumber\\
		&+ \mathbb{E}_c\mathbb{E}_{\bar{x}\sim \rho_c} \mathbb{E}_{a} \sum_{i_j\neq c} p_1(i_j) r_1(i_j)
		+ \mathbb{E}_c\mathbb{E}_{\bar{x}\sim \rho_c} \mathbb{E}_{a} p_0r_0,
	\end{align}
	where for $k=0, \ldots, K$ and $i_1,\ldots,i_k$, we denote
	\begin{align}
		&r_k(i_1,\ldots,i_k) := \mathbb{E}_{\bar{x}_1 \sim \rho_{i_1}} \cdots \mathbb{E}_{\bar{x}_k \sim \rho_{i_k}} \mathbb{E}_{\bar{x}_{k+1}, \ldots, \bar{x}_{K} \sim \rho_{c}} 
		\nonumber\\
		&\qquad \cdot \mathbb{E}_{a',\{a_k\}_{k\in[K]}} \mathcal{L}(a(\bar{x}), a'(\bar{x}), a_k(\bar{x}_k);f),
	\end{align}
	and 
	\begin{align}
		p_k(i_1,\ldots,i_k) := 
		&\mathrm{P}(\exists \{j_1,\ldots,j_K\},
		\nonumber\\
		&\text{ such that } c_{j_1}=i_1, \ldots, c_{j_k}=i_k, 
		\nonumber\\
		&\text{ and } c_{j_{k+1}=\cdots=c_K=c}),
	\end{align}
	where $\{j_1,\ldots,j_K\}$ is a rearrangement of $[K]$.
\end{theorem}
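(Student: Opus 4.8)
The plan is to prove \cref{thm::decomp} by applying the law of total expectation to the randomness in the negative labels $(c_1,\dots,c_K)$, partitioning the sample space according to which of the $K$ negatives share the anchor's class $c$, and then collapsing each cell of this partition using the permutation symmetry of the contrastive loss in the negative samples. Since the statement is an identity in which the anchor quantities $(c,\bar{x},a)$ appear untouched on both sides, it suffices to freeze $(c,\bar{x},a)$ and rewrite the inner expectation
\[
\mathbb{E}_{\{c_k\}}\,\mathbb{E}_{\bar{x}_k\sim\rho_{c_k}}\,\mathbb{E}_{a',\{a_k\}}\;\mathcal{L}^{\mathrm{un}}\big(a(\bar{x}),a'(\bar{x}),\{a_k(\bar{x}_k)\}_{k=1}^K;f\big),
\]
after which one reinstates $\mathbb{E}_c\mathbb{E}_{\bar{x}\sim\rho_c}\mathbb{E}_a$ termwise.

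First I would record the structural fact that drives everything: in its logistic form, $\mathcal{L}^{\mathrm{un}}(x,x',\{x_k\}_{k=1}^K;f)=\log\big(1+\sum_{k}\exp(-f(x)^\top[f(x')-f(x_k)])\big)$ depends on the negatives only through the unordered collection $\{x_k\}_{k=1}^K$, hence is invariant under any permutation of $[K]$ applied to $(x_1,\dots,x_K)$. Next I would expand the inner expectation over the discrete label vector $(c_1,\dots,c_K)$ by total expectation. Each realization splits $[K]$ into the collision set $T=\{j:c_j=c\}$ and its complement, on which the labels lie in $[C]\setminus\{c\}$; writing $k=K-|T|$ and letting $(i_1,\dots,i_k)$ be the non-colliding labels, the realization also pins down the conditional law of the negative originals, $\bar{x}_j\sim\rho_{c_j}$ independently with $a_j\sim\mathrm{P}_A$. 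Using the permutation invariance from the previous step together with this conditional independence (and Fubini, to see the value is unchanged under reordering $i_1,\dots,i_k$), the conditional expectation of $\mathcal{L}^{\mathrm{un}}$ given such a realization is exactly $r_k(i_1,\dots,i_k)$, irrespective of where the indices of $T$ sit. Finally I would regroup: summing over $k=0,\dots,K$ and over label tuples, the probabilities of all label vectors that yield a given non-colliding pattern $(i_1,\dots,i_k)$, together with the positional multiplicities, collect into the coefficient $p_k(i_1,\dots,i_k)$, the probability that the negatives realize exactly $k$ non-colliding labels $i_1,\dots,i_k$ alongside $K-k$ collisions with $c$. This yields the claimed sum, and reinstating $\mathbb{E}_c\mathbb{E}_{\bar{x}\sim\rho_c}\mathbb{E}_a$ finishes the argument.

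I expect the crux to be the collapse of each partition cell to $r_k$: it is precisely here that the symmetry of $\mathcal{L}^{\mathrm{un}}$ in the negatives is essential, which is exactly what the theorem has in mind when it says the decomposition ``only works for contrastive losses that treat negative samples equally'' -- for an asymmetric surrogate the conditional expectation would depend on \emph{which} positions collide and no clean $r_k$ would exist. A secondary, purely bookkeeping difficulty is to line up the combinatorial multiplicities with the precise definition of $p_k$ (ordered tuples versus unordered patterns, and the choice of which $|T|$ positions collide), so that after grouping, every label vector is counted exactly once and the identity is exact rather than merely correct up to constants.
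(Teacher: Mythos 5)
Your proposal is correct and takes essentially the same route as the paper: the paper likewise expands the label expectation via the binomial-style split $\big(\sum_{i_m\neq c}+\sum_{i_m=c}\big)$ in each index, then invokes the permutation symmetry of $\mathcal{L}^{\mathrm{un}}$ in the negatives to collapse all cells with the same non-colliding label pattern into a single $r_k$ weighted by $p_k$. Your framing in terms of conditioning on the collision set and the paper's explicit sum expansion are two descriptions of the same partition-and-collapse argument, and you correctly identify symmetry in the negatives as the step that makes it go through.
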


To derive the relationship between the unsupervised contrastive risk $\mathcal{R}^{\mathrm{un}}(f)$ and the downstream classification risk $\mathcal{R}^{\mathrm{sup}}(f)$, we construct an intermediate supervised risk $\bar{\mathcal{R}}^{\mathrm{sup}}(f)
:=\mathbb{E}_{c,\{c_k\}_{k=1}^K} \mathbb{E}_{\bar{x}\sim \rho_c} \log\Big(1 + \sum_{k=1}^K \exp\big(-f(\bar{x})^\top (\mu_c - \mu_{c_k})\big)\Big)$, which has a similar decomposition with $\mathcal{R}^{\mathrm{un}}(f)$ shown in Corollary \ref{cor::decomp}.

Next, in Theorem \ref{thm::rk}, we investigate each inner risks $r_k$, and build a relationship between $r_k$ and $r_k^{\mathrm{sup}}$. We show that for $k=0,\ldots,K$, $r_k$ is upper bounded by $r_k^{\mathrm{sup}}$ minus a series of distance terms, including the distances between same-class different-image augmentations and that between same-image data augmentations.
\begin{theorem}[Bound of Inner Risk]\label{thm::rk}
	Let $\bar{x}$ belong to Class $c$. For $k=0,\ldots,K$, given $i_1,\ldots,i_k \neq c$, we have
	\begin{align}
		&r_k(i_1,\ldots,i_k) 
		\nonumber\\
		&
		\geq r_k^{\mathrm{sup}}(i_1,\ldots,i_k) -[2\|f(a(\bar{x})) - f(\bar{x})\| 
		\nonumber\\
		&+ 2\mathbb{E}_{\bar{x}' \sim \rho_{c}} \max_{a',a''} \|f(a''(\bar{x}')) - f(a'(\bar{x}'))\|
		\nonumber\\
		&
		+ \mathbb{E}_{\bar{x}_m \sim \rho_{i_m}} \max_{a',a''} \|f(a''(\bar{x}_m)) - f(a'(\bar{x}_m))\|  
		\nonumber\\
		&+ \mathbb{E}_{\bar{x}' \sim \rho_{c}} \mathbb{E}_{a'} \min_{a''} \|f(a'(\bar{x})) - f(a''(\bar{x}'))\|],
	\end{align}
	where $r_k^{\mathrm{sup}}(i_1,\ldots,i_k):=\log \Big(1+\exp\big(-\sum_{m=1}^k f(\bar{x})^\top (\mu_c - \mu_{i_m})\big)\Big)$ and $\mu_i = \mathbb{E}_{\bar{x}' \sim \rho_i} f(\bar{x}')$, $i \in \{c,i_1,\ldots,i_k\}$.
\end{theorem}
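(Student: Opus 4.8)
The plan is to start from the defining expectation of $r_k(i_1,\dots,i_k)$ — an average of the logistic InfoNCE loss $\mathcal L^{\mathrm{un}}(a(\bar x),a'(\bar x),\{a_m(\bar x_m)\};f)$ over the random augmentations $a',a_1,\dots,a_K$ and the negative images $\bar x_1\sim\rho_{i_1},\dots,\bar x_k\sim\rho_{i_k}$, $\bar x_{k+1},\dots,\bar x_K\sim\rho_c$, with $\bar x$ and $a$ held fixed — and to transform it term by term into $r_k^{\mathrm{sup}}$, charging each transformation to one of the distance terms in the statement. The structural fact driving everything is that, with the anchor representation $f(a(\bar x))$ frozen, the map $(z',z_1,\dots,z_K)\mapsto\log\bigl(1+\sum_{m}\exp(-f(a(\bar x))^\top(z'-z_m))\bigr)$ is convex (a log-sum-exp of affine functions) and has a gradient whose coordinates are non-negative and sum to at most $1$; hence it is $1$-Lipschitz in the $\ell_\infty$ norm of the exponents. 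This yields two tools simultaneously: Jensen's inequality to pull internal expectations inside the loss for free, and a clean estimate for the effect of replacing one representation vector by another.

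First I would use Jensen to push the expectations over $a'$ and over the $(\bar x_m,a_m)$ inside, collapsing $f(a'(\bar x))$ to $\mathbb E_{a'}f(a'(\bar x))$ and each $f(a_m(\bar x_m))$ to its augmented class mean (of $\rho_{i_m}$ for $m\le k$, of $\rho_c$ for the $K-k$ collision indices). Then I would successively replace (i) the frozen anchor $f(a(\bar x))$ by $f(\bar x)$, (ii) the collapsed positive mean by $\mu_c$, and (iii) the collapsed negative means by $\mu_{i_m}$ (resp.\ $\mu_c$), bounding each step by convexity plus Cauchy--Schwarz with $\|f\|\le1$. The accounting is then just triangle inequality: the anchor swap costs $2\|f(a(\bar x))-f(\bar x)\|$ — factor $2$ because the anchor is paired with both a positive and a negative, each of norm $\le1$; the negative-mean swaps cost $\mathbb E_{\bar x_m\sim\rho_{i_m}}\max_{a',a''}\|f(a''(\bar x_m))-f(a'(\bar x_m))\|$ (and $\mathbb E_{\bar x'\sim\rho_c}\max_{a',a''}\|\cdot\|$ for the collisions); and — the key point — the positive swap $\mathbb E_{a'}f(a'(\bar x))\to\mu_c=\mathbb E_{\bar x'\sim\rho_c}f(\bar x')$ is bounded, for each sampled $a'$ and each auxiliary target image $\bar x'$, by inserting a \emph{freely chosen} augmentation $a''$ of $\bar x'$: $\|f(a'(\bar x))-f(\bar x')\|\le\min_{a''}\|f(a'(\bar x))-f(a''(\bar x'))\|+\max_{a''}\|f(a''(\bar x'))-f(\bar x')\|$, which after averaging over $a'$ and $\bar x'$ gives exactly the minimum same-class different-image distance term plus another $\max$-distance term. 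Summing these errors through the $\ell_\infty$-Lipschitz bound reproduces the bracketed quantity.

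After the swaps the lower bound reads $\log\bigl(1+\sum_{m=1}^k e^{-f(\bar x)^\top(\mu_c-\mu_{i_m})}+(K-k)\bigr)$ minus the distance terms, and it remains to collapse this to $r_k^{\mathrm{sup}}(i_1,\dots,i_k)=\log\bigl(1+\exp(-\sum_{m=1}^k f(\bar x)^\top(\mu_c-\mu_{i_m}))\bigr)$ by monotonicity of $\log$ together with a short AM--GM/convexity step. I expect this final reconciliation to be the main obstacle: the InfoNCE loss produces a \emph{sum} of $k$ exponentials, whereas $r_k^{\mathrm{sup}}$ is a \emph{single} exponential of the sum of the matching exponents, so one must establish $\sum_{m=1}^k e^{a_m}+(K-k)\ge\exp(\sum_{m=1}^k a_m)$ — immediate when $f(\bar x)$ is no closer to the wrong class means than to $\mu_c$, but needing care or a mild non-degeneracy remark otherwise — and one must track how the $K-k$ collision exponentials eventually feed the $\tau_K\,\mathbb E\log(\mathrm{Col}+1)$ term of Theorem~\ref{thm::main}. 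A secondary point to watch is that every intermediate $\min$/$\max$ over augmentations must be over auxiliary augmentations one is free to optimize, never over the loss's random draws.
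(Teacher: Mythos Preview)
Your approach is essentially the paper's: apply Jensen (convexity of log-sum-exp) to push the expectations over $a',\{a_m\},\{\bar x_m\}$ into the exponents, then upper-bound each exponent $\mathbb{E}\bigl[f(a(\bar x))^\top(f(a'(\bar x))-f(a_m(\bar x_m)))\bigr]$ by inserting an auxiliary same-class sample $\bar x'\sim\rho_c$ and a freely optimized augmentation, and finally peel the resulting distance penalties out of the $\log$ via $\log(1+Ae^{-D})\geq\log(1+A)-D$. The accounting you describe (anchor swap $\to$ factor $2$; positive-to-$\mu_c$ swap split as $\min_{a''}$ plus a $\max$; negative means $\to$ one $\max$ per class) matches the paper's \eqref{eq::same_cls}--\eqref{eq::diff_cls_b} almost line for line.

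The ``final reconciliation'' you flag as the main obstacle is a phantom caused by a typo in the theorem statement. Both the paper's own proof of Theorem~\ref{thm::rk} and the downstream Corollary~\ref{cor::decomp} (which is what Theorem~\ref{thm::upper} actually consumes) take
\[
r_k^{\mathrm{sup}}(i_1,\dots,i_k)=\log\Bigl(1+(K-k)+\sum_{m=1}^k\exp\bigl(-f(\bar x)^\top(\mu_c-\mu_{i_m})\bigr)\Bigr),
\]
i.e.\ a sum of $k$ exponentials plus the $K-k$ collision terms, not a single exponential of the summed margins. That is precisely the expression you reach before your proposed AM--GM step, so you should stop there. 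This is just as well: the inequality you would need for the statement as literally printed, $(K-k)+\sum_{m=1}^k e^{a_m}\geq e^{\sum_m a_m}$, fails in general (take $k=K=2$, $a_1=a_2=10$), so no unconditional proof of the displayed $r_k^{\mathrm{sup}}$ exists. Your instinct that the $K-k$ collision contributions are what later become the $\tau_K\,\mathbb{E}\log(\mathrm{Col}+1)$ term is correct; that bookkeeping happens in Lemma~\ref{lem::Rsup}, not here.
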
 
Then combining Theorem \ref{thm::decomp}, Corollary \ref{cor::decomp}, and Theorem \ref{thm::rk}, we reach Theorem \ref{thm::upper} showing the relationship between $\mathcal{R}^{\mathrm{un}}$ and $\bar{\mathcal{R}}^{\mathrm{sup}}$, i.e. $\bar{\mathcal{R}}^{\mathrm{sup}}$ is upper bounded by $\mathcal{R}^{\mathrm{un}}$ and two distance terms induced by data augmentation.
\begin{theorem}[Error Bound of $\bar{\mathcal{R}}^{\mathrm{sup}}$]\label{thm::upper}
	We have the following upper bound of the supervised risk.
	\begin{align}
		\bar{\mathcal{R}}^{\mathrm{sup}} 
		&\leq \mathcal{R}^{\mathrm{un}} 
		+ \mathbb{E}_c \mathbb{E}_{\bar{x},\bar{x}'\sim \rho_c} \mathbb{E}_{a} \min_{a'} \|f(a(\bar{x})) - f(a'(\bar{x}'))\| 
		\nonumber\\
		&\quad+ 5\mathbb{E}_c\mathbb{E}_{\bar{x}' \sim \rho_{c}} \max_{a,a'} \|f(a(\bar{x}')) - f(a'(\bar{x}'))\|.
	\end{align}
\end{theorem}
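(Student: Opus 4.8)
The plan is to chain the two parallel decompositions with the per-block estimate of Theorem~\ref{thm::rk}. Corollary~\ref{cor::decomp} expresses the intermediate supervised risk in the same combinatorial shape as $\mathcal{R}^{\mathrm{un}}$ in Theorem~\ref{thm::decomp},
\[
\bar{\mathcal{R}}^{\mathrm{sup}}(f)=\mathbb{E}_c\,\mathbb{E}_{\bar{x}\sim\rho_c}\,\mathbb{E}_a\sum_{k=0}^{K}\sum_{i_1,\ldots,i_k\neq c}p_k(i_1,\ldots,i_k)\,r_k^{\mathrm{sup}}(i_1,\ldots,i_k),
\]
with the \emph{same} weights $p_k(i_1,\ldots,i_k)$ as in the decomposition of $\mathcal{R}^{\mathrm{un}}$. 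So it suffices to dominate $r_k^{\mathrm{sup}}$ by $r_k$ plus augmentation-induced remainders, block by block, and then resum.

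First I would fix $c$ and a tuple $(i_1,\ldots,i_k)$ with all $i_m\neq c$, and rewrite Theorem~\ref{thm::rk} as
\[
r_k^{\mathrm{sup}}(i_1,\ldots,i_k)\leq r_k(i_1,\ldots,i_k)+D_k(\bar{x},a;i_1,\ldots,i_k),
\]
where $D_k$ gathers its four remainder pieces: $2\|f(a(\bar{x}))-f(\bar{x})\|$; the anchor-class same-image term $2\,\mathbb{E}_{\bar{x}'\sim\rho_c}\max_{a',a''}\|f(a''(\bar{x}'))-f(a'(\bar{x}'))\|$; the negative-class same-image term(s) $\mathbb{E}_{\bar{x}_m\sim\rho_{i_m}}\max_{a',a''}\|f(a''(\bar{x}_m))-f(a'(\bar{x}_m))\|$; and the same-class cross-image minimum term $\mathbb{E}_{\bar{x}'\sim\rho_c}\mathbb{E}_{a'}\min_{a''}\|f(a'(\bar{x}))-f(a''(\bar{x}'))\|$. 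Substituting into the decomposition of $\bar{\mathcal{R}}^{\mathrm{sup}}$ and reading Theorem~\ref{thm::decomp} backwards, the $r_k$-part of the $p_k$-weighted sum reassembles \emph{exactly} into $\mathcal{R}^{\mathrm{un}}(f)$, so that
\[
\bar{\mathcal{R}}^{\mathrm{sup}}(f)\leq\mathcal{R}^{\mathrm{un}}(f)+\mathbb{E}_c\,\mathbb{E}_{\bar{x}\sim\rho_c}\,\mathbb{E}_a\sum_{k=0}^{K}\sum_{i_1,\ldots,i_k\neq c}p_k(i_1,\ldots,i_k)\,D_k.
\]

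It remains to simplify the $p_k$-weighted sum of remainders. The bookkeeping fact I would invoke is that, for each fixed $c$, $\sum_{k=0}^{K}\sum_{i_1,\ldots,i_k\neq c}p_k(i_1,\ldots,i_k)=1$ --- the same normalization that makes Theorem~\ref{thm::decomp} an identity. Hence the two pieces of $D_k$ that do not depend on $(i_1,\ldots,i_k)$ factor straight out: the minimum term becomes $\mathbb{E}_c\,\mathbb{E}_{\bar{x},\bar{x}'\sim\rho_c}\,\mathbb{E}_a\min_{a'}\|f(a(\bar{x}))-f(a'(\bar{x}'))\|$ (the $\mathbb{E}_a$ being vacuous on it, after relabeling the augmentations), matching the first target term with coefficient $1$; and the anchor-class same-image term gives $2\,\mathbb{E}_c\,\mathbb{E}_{\bar{x}'\sim\rho_c}\max_{a,a'}\|f(a(\bar{x}'))-f(a'(\bar{x}'))\|$. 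For the piece $\|f(a(\bar{x}))-f(\bar{x})\|$ I would bound $\|f(a(\bar{x}))-f(\bar{x})\|\leq\max_{a,a'}\|f(a(\bar{x}))-f(a'(\bar{x}))\|$, treating the original image as the trivial augmentation of itself, which contributes another $2\,\mathbb{E}_c\,\mathbb{E}_{\bar{x}\sim\rho_c}\max_{a,a'}\|f(a(\bar{x}))-f(a'(\bar{x}))\|$. Finally, the negative-class same-image terms depend on $(i_1,\ldots,i_k)$ only through the classes $i_m$, which are themselves i.i.d.\ draws from $\boldsymbol{\pi}$ conditioned on differing from $c$; by exchangeability of the negatives and averaging over the anchor class $c\sim\boldsymbol{\pi}$, their $p_k$-weighted average collapses to $\mathbb{E}_c\,\mathbb{E}_{\bar{x}'\sim\rho_c}\max_{a,a'}\|f(a(\bar{x}'))-f(a'(\bar{x}'))\|$ with coefficient $1$. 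Summing the three same-image contributions, $2+2+1=5$, yields the stated bound.

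The main obstacle is this last simplification, specifically the exchangeability/relabeling argument for the negative-side remainders: one must verify that weighting $\mathbb{E}_{\bar{x}_m\sim\rho_{i_m}}\max_{a',a''}\|f(a''(\bar{x}_m))-f(a'(\bar{x}_m))\|$ against $p_k(i_1,\ldots,i_k)$ and then averaging over the anchor class reproduces the same-class same-image functional with coefficient \emph{exactly} $1$, with no inflation from the ``$\neq c$'' conditioning or from the varying number of non-colliding negatives across the blocks; this is where the precise definition of $p_k$ together with the i.i.d.\ structure of the label draws has to be used carefully. A secondary subtlety is the term $\|f(a(\bar{x}))-f(\bar{x})\|$, which involves the representation of the \emph{original} image rather than an augmentation and is here simply dominated by the maximum same-image distance; it is precisely this crude step that is sharpened under Assumption~\ref{ass::center}, where $f(\bar{x})=\mathbb{E}_{a'}f(a'(\bar{x}))$ lets one pass to an average via Jensen's inequality, giving the improved constant in Theorem~\ref{thm::main_improve}.
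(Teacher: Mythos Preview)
Your proposal is correct and follows essentially the same route as the paper: insert the block-wise bound of Theorem~\ref{thm::rk} into the parallel decompositions (Theorem~\ref{thm::decomp} and Corollary~\ref{cor::decomp}), use the normalization $\sum_k\sum_{i_1,\ldots,i_k\neq c}p_k=1$ to pull out the tuple-independent remainders, bound $\|f(a(\bar{x}))-f(\bar{x})\|$ via the identity augmentation, and collapse the negative-class same-image term by averaging over the label draws to pick up the last unit in the coefficient $5$. The paper executes the last step by writing the $p_k$-weighted sum explicitly as $\mathbb{E}_c\mathbb{E}_{c'\neq c}\mathbb{E}_{\bar{x}'\sim\rho_{c'}}\max_{a',a''}\|f(a''(\bar{x}'))-f(a'(\bar{x}'))\|$ and then dropping the ``$\neq c$'' conditioning (which only shrinks the term), so your worry about inflation is handled by an inequality rather than an exact identity.
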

Finally, to close the gap between $\bar{\mathcal{R}}^{\mathrm{sup}}$ and $\mathcal{R}^{\mathrm{sup}}$, we adopt the CURL bound directly. Theorem \ref{thm::upper} and Lemma \ref{lem::Rsup} together finish the proof of Theorem \ref{thm::main}.
\begin{lemma}[CURL bound \cite{arora2019theoretical, nozawa2021understanding}]\label{lem::Rsup}
	Denote $\tau_K=\mathrm{P}(\mathrm{Col}(c,\{c_k\}_{k=1}^K) \neq 0)$ as the class collision probability, where $\mathrm{Col}(c,\{c_k\}_{k=1}^K) = \sum_{k=1}^K \boldsymbol{1}[c=c_k]$. Then we have
	\begin{align}
		\bar{\mathcal{R}}^{\mathrm{sup}}(f) 
		&= (1-\tau_K)  \mathcal{R}^{\mathrm{sup}}(f) 
		\nonumber\\
		&\qquad\ + \tau_K \mathbb{E}_{c, \{c_k\}_{k=1}^K} \log(\mathrm{Col}+1).
	\end{align} 
\end{lemma}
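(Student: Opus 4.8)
The plan is to derive the identity by the law of total expectation, splitting the average over the negative‑class labels $c_1,\dots,c_K$ according to the collision count $\mathrm{Col}=\mathrm{Col}(c,\{c_k\}_{k=1}^K)$. Concretely, I would first condition $\bar{\mathcal{R}}^{\mathrm{sup}}(f)$ on the tuple $(c,c_1,\dots,c_K)$ and then partition the sample space into the collision‑free event $\{\mathrm{Col}=0\}$, which by the definition of $\tau_K$ has probability $1-\tau_K$, and its complement $\{\mathrm{Col}\ge 1\}$ of probability $\tau_K$. This exhibits $\bar{\mathcal{R}}^{\mathrm{sup}}(f)$ as a convex combination of a conditional ``no‑collision'' risk and a conditional ``collision'' risk, and the task reduces to identifying these two quantities with $\mathcal{R}^{\mathrm{sup}}(f)$ and $\mathbb{E}_{c,\{c_k\}}\log(\mathrm{Col}+1)$ respectively.

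On $\{\mathrm{Col}=0\}$ every sampled $c_k$ differs from $c$, so the integrand $\log\!\big(1+\sum_{k=1}^K\exp(-f(\bar x)^\top(\mu_c-\mu_{c_k}))\big)$ coincides with the mean‑classifier loss $\mathcal{L}^{\mathrm{sup}}(\bar x,c;f)$ on the drawn negatives; since $\bar{\mathcal{R}}^{\mathrm{sup}}$ and $\mathcal{R}^{\mathrm{sup}}$ are built from the same class‑sampling process of Section \ref{sec::formulations}, averaging this integrand over $\bar x\sim\rho_c$ and over the law of $(c,\{c_k\})$ conditioned on no collision reproduces $\mathcal{R}^{\mathrm{sup}}(f)$, contributing the first term after multiplication by $1-\tau_K$. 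On $\{\mathrm{Col}\ge 1\}$ I would use the structural fact that whenever $c_k=c$ the difference $\mu_c-\mu_{c_k}$ vanishes, so that coordinate contributes exactly $\exp(0)=1$ inside the logarithm; isolating the $\mathrm{Col}$ colliding coordinates then yields $\log(\mathrm{Col}+1)$ for the collision risk, which after weighting by $\tau_K$ and using that $\log(\mathrm{Col}+1)=0$ on $\{\mathrm{Col}=0\}$ (so the conditioning may be dropped) gives $\tau_K\mathbb{E}_{c,\{c_k\}}\log(\mathrm{Col}+1)$. Summing the two contributions yields the claimed relation, and rearranging then expresses $\mathcal{R}^{\mathrm{sup}}(f)$ in terms of $\bar{\mathcal{R}}^{\mathrm{sup}}(f)$ as needed to complete the proof of Theorem \ref{thm::main}.

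The delicate point, which is where I expect the real work to lie, is the bookkeeping on the collision event: one must verify that the $\mathrm{Col}$ coordinates with $c_k=c$ separate out exactly as $\log(\mathrm{Col}+1)$ while the remaining genuinely‑negative coordinates (whose $\exp(\cdot)$ contributions are nonnegative) recombine with the collision‑free term to reconstruct $\mathcal{R}^{\mathrm{sup}}(f)$ rather than leaving a residual. This regrouping is available only because the logistic/InfoNCE loss is a symmetric function of the multiset $\{\mu_c-\mu_{c_k}\}_{k=1}^K$ alone — precisely the property the text flags as making such decompositions fail for losses that do not weight negatives equally. A secondary check, needed to keep the relation an identity rather than an inequality, is that the conditional distribution of the classes given $\{\mathrm{Col}=0\}$ matches the sampling convention implicit in $\mathcal{R}^{\mathrm{sup}}(f)$; this is automatic since both risks are defined through the single data‑generation process, but it should be recorded explicitly. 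The remaining algebra — reindexing the $K$ i.i.d.\ negatives by their collision pattern and collecting terms — is routine.
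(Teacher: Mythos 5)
The paper does not actually prove this lemma — it is cited wholesale from the CURL references, so there is no internal proof to compare against. Your proposed route (total expectation split on the collision count) is the right structural idea and is the same one used in Arora et al. However, there is a genuine gap in your step on the collision event, and in fact the stated equality cannot be proven because it does not hold.

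The gap is in the sentence claiming that the $\mathrm{Col}$ colliding coordinates ``separate out exactly as $\log(\mathrm{Col}+1)$ while the remaining genuinely-negative coordinates \dots recombine \dots rather than leaving a residual.'' The logarithm is not additive: on $\{\mathrm{Col}\ge 1\}$ the integrand is $\log\bigl(1+\mathrm{Col}+S\bigr)$, where $S=\sum_{k:\,c_k\neq c}\exp\bigl(-f(\bar x)^\top(\mu_c-\mu_{c_k})\bigr)>0$ whenever $\mathrm{Col}<K$, and there is no way to peel this into a sum of a $\log(\mathrm{Col}+1)$ piece and a piece that reassembles into $\mathcal{R}^{\mathrm{sup}}$. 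What you can honestly extract is the one-sided bound $\log(1+\mathrm{Col}+S)\ge\log(\mathrm{Col}+1)$, which yields $\bar{\mathcal{R}}^{\mathrm{sup}}(f)\ge(1-\tau_K)\mathcal{R}^{\mathrm{sup}}(f)+\tau_K\,\mathbb{E}\bigl[\log(\mathrm{Col}+1)\,\big|\,\mathrm{Col}\ge 1\bigr]$, an inequality rather than an identity. A concrete two-class check confirms it is strict: with $C=2$, $\pi_1=\pi_2=\tfrac12$, $K=1$, one computes $\bar{\mathcal{R}}^{\mathrm{sup}}=\tfrac12\log 2+\tfrac12\mathcal{R}^{\mathrm{sup}}$ while $(1-\tau_K)\mathcal{R}^{\mathrm{sup}}+\tau_K\,\mathbb{E}\log(\mathrm{Col}+1)=\tfrac12\mathcal{R}^{\mathrm{sup}}+\tfrac14\log 2$, a gap of $\tfrac14\log 2$. (This also exposes a smaller bookkeeping slip on your side: once you drop the conditioning using $\log(\mathrm{Col}+1)=0$ on the no-collision event, you should land on the unconditional $\mathbb{E}\log(\mathrm{Col}+1)$, not on an extra factor of $\tau_K$ in front of it; the paper's own factor $\tau_K\mathbb{E}_{c,\{c_k\}}\log(\mathrm{Col}+1)$ already equals $\tau_K^2\,\mathbb{E}[\log(\mathrm{Col}+1)\mid\mathrm{Col}\ge 1]$, which is weaker still.) None of this harms the downstream Theorem \ref{thm::main}, which only needs the lower bound on $\bar{\mathcal{R}}^{\mathrm{sup}}$, but you should state your lemma as a ``$\ge$'' and prove it that way rather than asserting exact regrouping of the log.
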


\section{Discussions on the Error Bound}\label{sec::discussions}

We first compare our bound with previous studies on building relationships between the unsupervised contrastive risk and downstream supervised risk. 
Most of these previous bounds are under the CURL framework \cite{arora2019theoretical, ash2022investigating, bao2022surrogate}, which assumes that the positive samples are conditionally independently drawn from a certain class, whereas our framework explicitly formulates the probability distribution of data augmentation and assumes that the positive samples are drawn as different augmentations of the same input image. 
That is, we are based on a more realistic data generation process that accords with the empirical applications of contrastive learning.

Perhaps more related to our bound, \citet{nozawa2021understanding} derived a lower bound of InfoNCE loss based on the same data generation process as ours. However, they treated the gap term $d(f)$ induced by data augmentation as an almost constant, whereas in this paper, we further dissect the augmentation-relevant terms into two distance terms based on our proposed error decomposition, resulting in a trade-off between the minimum same-class distance and the maximum same-image distance. Our result suggests that to reach a better downstream classification risk, data augmentation is equally important as the unsupervised contrastive risk.

Also note that because the focus of our bound is on the role of data augmentation rather than that of negative samples, we directly adopt CURL's class collision term for simplicity. Nonetheless, our bound is in fact compatible with previous bounds under the CURL framework, e.g. \citet{arora2019theoretical, nozawa2021understanding, ash2022investigating, bao2022surrogate}, with replacing the class collision term with their corresponding forms. 
As the two distance terms in \eqref{eq::bound} are independent of the number of negative samples $K$, we could reach the same conclusions on the role of $K$ as in the corresponding previous works.

We also compare our bound with works that explain contrastive learning from the perspective of data augmentation. 
\citet{wu2020mutual} discussed the importance of data augmentation from the perspective of permutation invariance, but the impact of data augmentation was not reflected in the error bound.
\citet{haochen2021provable} proposed a concept called augmentation graph where the vertices are all the augmented data points and the edge weights is the probabilities that two augmentations are generated from the same input image. Their derived downstream error bound relies on the eigenvalues of the adjacency matrix of the augmentation graph. 
Similarly, \citet{wang2021chaos} built their theory on the assumption of augmentation overlap, meaning that two images have the same augmented view under some data augmentation. 
However, it is unlikely in practice that two different real images have exactly the same augmented views, especially as we usually use only two views in training instead of using multiple ones.
By contrast, in this paper, we treat the connectivity between augmented views in a rather ``soft'' way. Specifically, our bound does not require the augmentations to be exactly the same, nor do we need the perfect alignment assumption adopted in \citet{wang2021chaos}. Instead, the distances between different augmentations are shown explicitly in our bound without any additional assumptions.
Later on, \citet{huang2023towards} defined a kind of $(\sigma,\delta)$-measure to mathematically quantify the data augmentation, and provided an upper bound of the downstream classification error rate. Whereas their bound relies on the maximum (augmentation) distance of same-class images, our bound relies on a trade-off between two distance terms, enabling us to further discuss real-world data augmentation methods.

\section{Impacts of Data Augmentations}\label{sec::impacts}

In this section, we try to explain the trade-off between the two distance terms in Theorems \ref{thm::main} and \ref{thm::main_improve} from the perspective of pixel-level discussions of the training images. 
As suggested by \citet{chen2020simple}, the combination of random cropping and random color distortion contributes to successful downstream classification. Therefore, in this part, we discuss the impacts of random cropping and random color distortion respectively.

First of all, under the Lipschitz continuous assumption, we transform our main theory into an error bound w.r.t. the pixel-level distances, so that we are able to discuss the impact of specific data augmentation.
\begin{assumption}[Lipschitz Continuity]\label{ass:lip}
    For $x,x'\in \mathcal{X}$, there exists a constant $c_L \geq 0$, such that
    \begin{align}
        \|f(x) - f(x')\| \leq c_L \|x-x'\|.
    \end{align}
\end{assumption}

\begin{theorem}[Error Bound with Pixel-level Distances]\label{thm::bound_pixel}
    Under Assumptions \ref{ass::center} and \ref{ass:lip}, we have
    \begin{align}
        \mathcal{R}^{\mathrm{sup}} 
        &\leq \frac{1}{1-\tau_K}[\mathcal{R}^{\mathrm{un}} - \tau_K \mathbb{E}_{c, \{c_k\}_{k=1}^K} \log(\mathrm{Col}+1)
        \nonumber\\
        &\quad+ c_L \mathbb{E}_c \mathbb{E}_{\bar{x},\bar{x}'\sim \rho_c} \mathbb{E}_{a} \min_{a'} \|a(\bar{x}) - a'(\bar{x}')\| 
        \nonumber\\
        &\quad+ c_L \mathbb{E}_c\mathbb{E}_{\bar{x}' \sim \rho_{c}} \max_{a,a'} \|a(\bar{x}') - a'(\bar{x}')\|].
        \label{eq::bound_pix}
    \end{align}
\end{theorem}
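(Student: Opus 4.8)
The plan is to start from the improved error bound in Theorem~\ref{thm::main_improve}, which already holds under Assumption~\ref{ass::center} and carries coefficient $1$ on the same-image distance term, and to replace the two representation-space distance terms by their pixel-space analogues, term by term, using the Lipschitz estimate of Assumption~\ref{ass:lip}. Since $1-\tau_K>0$, the inequality \eqref{eq::bound_imp} is preserved if each summand inside the bracket is replaced by a larger quantity, so it suffices to prove
\begin{align*}
&\mathbb{E}_c \mathbb{E}_{\bar{x},\bar{x}'\sim \rho_c} \mathbb{E}_{a} \min_{a'} \|f(a(\bar{x})) - f(a'(\bar{x}'))\| \\
&\quad\leq c_L\, \mathbb{E}_c \mathbb{E}_{\bar{x},\bar{x}'\sim \rho_c} \mathbb{E}_{a} \min_{a'} \|a(\bar{x}) - a'(\bar{x}')\|,
\end{align*}
together with the analogous inequality for the maximum term, and then substitute both into \eqref{eq::bound_imp}.

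The maximum term is the easy one, because the maximum commutes with the pointwise Lipschitz bound. For every pair $(a,a')$, Assumption~\ref{ass:lip} gives $\|f(a(\bar{x}')) - f(a'(\bar{x}'))\| \le c_L\|a(\bar{x}') - a'(\bar{x}')\|$; taking $\max_{a,a'}$ on both sides, the right-hand side becomes $c_L\max_{a,a'}\|a(\bar{x}') - a'(\bar{x}')\|$, and monotonicity of $\mathbb{E}_{\bar{x}'\sim\rho_c}$ and $\mathbb{E}_c$ delivers the desired bound on this term.

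The minimum term is where one must be careful to pick the right feasible point. Fix $c$, $\bar{x},\bar{x}'\in\rho_c$ and $a$, and let $a^\star = a^\star(a,\bar{x},\bar{x}')$ attain (or, if $\mathcal{A}$ is not compact, $\varepsilon$-approximately attain) $\min_{a'}\|a(\bar{x}) - a'(\bar{x}')\|$ in pixel space. Using $a^\star$ as a candidate in the representation-space minimum and then applying Assumption~\ref{ass:lip},
\begin{align*}
\min_{a'}\|f(a(\bar{x})) - f(a'(\bar{x}'))\|
&\le \|f(a(\bar{x})) - f(a^\star(\bar{x}'))\| \\
&\le c_L\|a(\bar{x}) - a^\star(\bar{x}')\| \\
&= c_L \min_{a'}\|a(\bar{x}) - a'(\bar{x}')\|.
\end{align*}
Taking $\mathbb{E}_a$, then $\mathbb{E}_{\bar{x},\bar{x}'\sim\rho_c}$, then $\mathbb{E}_c$ yields the claimed inequality, and plugging this and the maximum-term bound into \eqref{eq::bound_imp} completes the proof.

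The main obstacle is conceptual rather than computational: it is the direction of the minimum step. Lipschitz continuity only controls representation distances by pixel distances, so the representation-level $\min$ must be bounded by $c_L$ times the pixel-level $\min$, which works precisely because the pixel-minimizer is always a feasible argument for the representation-space problem; the opposite comparison would not follow from Assumption~\ref{ass:lip}. A secondary technical point is that $a^\star$ should be a measurable function of $(a,\bar{x},\bar{x}')$ so that $\mathbb{E}_a$ and the outer expectations are well-defined; if $\mathcal{A}$ lacks the compactness needed for the infimum to be attained, one replaces $a^\star$ by an $\varepsilon$-minimizer, carries the additional $c_L\varepsilon$ through all the expectations, and lets $\varepsilon\to 0$.
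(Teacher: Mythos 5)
Your proof is correct and is the natural argument: apply the Lipschitz bound pointwise to the two distance terms of Theorem~\ref{thm::main_improve}, using the pixel-level minimizer as a feasible point for the representation-level minimum and monotonicity for the maximum and the outer expectations. The paper does not spell out a proof for Theorem~\ref{thm::bound_pixel}, treating it as an immediate consequence of Theorem~\ref{thm::main_improve} together with Assumption~\ref{ass:lip}, and your argument is exactly that intended derivation, with the added (correct) care about the direction of the min comparison and measurability of the approximate minimizer.
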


Inspired by \citet{koenderink1984structure}, we introduce the pixel-level formulation of an $d\times d$ 3-channel original input image.
Specifically, for any integers $j,\ell \in [d]$, define
\begin{align}
	I_{j,\ell}=[\frac{j-1}{d},\frac{j}{d})\times[\frac{\ell-1}{d},\frac{\ell}{d})
\end{align}
representing a square with side length $1/d$.
Then in each channel, a $d\times d$ image $\bar{x}=(\bar{x}_{j,\ell}^{(i)})_{j,\ell \in [d], i=1,2,3}$ with $d^2$ pixels can be expressed as a function
\begin{align}
	\xi^{(i)} : \mathbb{R}^2 \to [0,\infty),
\end{align}
where the value of the $(j,\ell)$-th pixel in the $i$-th channel is given by
\begin{align}
	\bar{\xi}_{j,\ell}^{(i)} = d^2 \int_{I_{j,\ell}} \xi^{(i)}(u,v) \,du\,dv, \quad j,\ell \in \{1,\ldots,d\},
\end{align}
representing the average intensity of $\xi^{(i)}$ on $I_{j,\ell}$.

\subsection{Semantic Label Assumption}
We introduce the generation process of pixels based on a semantic label assumption. We underline that our proposed pixel-level generation process aims just to provide a possible explanation for the trade-off in the two distances terms of Theorem \ref{thm::bound_pixel}, rather than generating images. 

Specifically, we assume that an image can be decomposed into several \textit{semantic areas} with their corresponding \textit{semantic labels}, where the semantic areas/labels are dependent on the class label. That is, for an original image $\bar{x}$, aside from a class label $y$ for the entire image, each pixel $\bar{h}_{j,\ell}$ also has a unique semantic label $s$ that relates to $y$.

\begin{wrapfigure}{r}{0.53\linewidth}
	\centering
	\subfigure[Automobile.]{
		\includegraphics[width=0.42\linewidth]{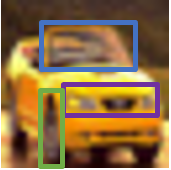}
		\label{fig::automobile}
	}
	\subfigure[Truck.]{
		\includegraphics[width=0.42\linewidth]{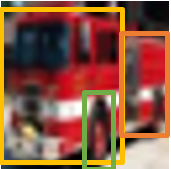}
		\label{fig::truck}
	}
    \vspace{-3mm}
	\caption{Illustration of semantic label assumption. (a) An \textit{automobile} image with semantic labels \textit{windshield} (blue), \textit{headlights} (purple), and \textit{wheels} (green); (b) an \textit{truck} image with semantic labels \textit{truck cab} (yellow excluding green), \textit{cargo box} (orange), and \textit{wheels} (green).}
	\label{fig::example}
	\vspace{-4mm}
\end{wrapfigure}
For example, an image of Class \textit{automobile} usually has semantic features \textit{windshield}, \textit{headlights}, \textit{wheels}, etc., and an image of Class \textit{truck} usually has semantic features \textit{truck cab}, \textit{cargo box}, \textit{wheels}, etc.
As illustrated in Figure \ref{fig::example}, we mark the disjoint semantic areas with distinguished color boxes, and we mark the shared semantic \textit{wheels} with the same color.

Mathematically, assume there are $T>0$ possible semantic labels $\{s_1,\ldots,s_T\} \in \{0,1\}^T$. (Without loss of generality, we assume $s_T$ represents the background). We use $s_t = 1$ to describe that the semantic $s_t$ exists in an image of Class $y$ and $s_t=0$ otherwise. For $t\in [T]$ and $y\in[C]$, we denote $q_y := \mathrm{P}(s_t=1|y)$ as the probability that a semantic label $s_t$ is related to Class label $y$. Moreover, for $i=1,2,3$, denote $\eta_{s_t}^{(i)} = \mathrm{P}(\bar{\xi}^{(i)} | s_t)$ as the probability of a pixel with semantic label $s_t$ to have value $\bar{\xi}^{(i)}$ in the $i$-th channel.

Then the pixel-level generation process is shown as follows:
(i) Draw a class label $y \in [C] \sim \boldsymbol{\pi}$; (ii) draw semantic labels $s_t \sim q_y$; (iii) generate a disjoint random partition $(J_m)_{m=1}^{\sum_{t=1}^T s_t}$; (iv) For each pixel in $J_m$ with semntic label $s_t$, draw a value in the $i$-th channel following $\eta_{s_t}^{(i)}$.

\subsection{Impacts of Random Crop}\label{sec::crop}

Based on the pixel-level formulation, given the scale parameter $\delta \in (0,1]$, we formulate the random resized crop of an image as $\xi \circ A_{\mathrm{crop}}$
\begin{align*}
	\xi \circ a_{\mathrm{crop}}(u,v) 
	&= \xi(a_{\mathrm{crop}}(u,v))
	= \xi(\theta u-\tau, \theta v-\tau'),
\end{align*}
where $\theta \sim \mathrm{Unif}(0,\delta]$, $\tau,\tau' \sim \mathrm{Unif}[0,1]$ are i.i.d. random variables.

Denote $\|\cdot\|_F$ as the ‌Frobenius norm.
For the minimum distance between same-class different-image augmentations, if $a(\bar{x})$ contains only same-semantic label pixels (with semantic label $s$), then we have
\begin{align}
    &\mathbb{E}_{a(\bar{x})}\min_{a'} \|a(\bar{x}) - a'(\bar{x}')\|_F
    \nonumber\\
    &= \mathbb{E}_{\bar{\xi}}\min_{a'} \Big[\sum_{j,\ell \in [d], i\in [3]} \big(\bar{\xi}_{j,\ell}^{(i)} - \bar{\xi'}_{j,\ell}^{(i)}\big)^2 \Big]^{1/2}
    \nonumber\\
    &\leq \Big[\sum_{j,\ell \in [d], i\in [3]} \mathbb{E}_{\bar{\xi}_{j,\ell}^{(i)} \sim \eta_s^{(i)}} \big(\bar{\xi}_{j,\ell}^{(i)} - \mu_s^{(i)} \big)^2 \Big]^{1/2}
    \nonumber\\
    &\quad + \Big[\sum_{j,\ell \in [d], i\in [3]} \mathbb{E}_{\bar{\xi'}_{j,\ell}^{(i)} \sim \eta_s^{(i)}} \big(\bar{\xi'}_{j,\ell}^{(i)} - \mu_s^{(i)} \big)^2\Big]^{1/2}
    \nonumber\\
    &= 2\Big[d^2 \cdot \sum_{i\in[3]} {\sigma_s^{(i)}}^2\Big]^{1/2} := 2\sigma,
    \label{eq::single_semantic}
\end{align}
where $\mu_s^{(i)}$ and ${\sigma_s^{(i)}}^2$ denote the mean and variance pixel value of semantic class $s$ in the $i$-th channel, and the second inequality holds because there exists a small enough $\theta$ such that its induced resized-cropped $a'(x')$ has all pixels with semantic label $s$.

On the other hand, if the pixels in $a(\bar{x})$ has more than one semantic labels, then we have
\begin{align}
    &\mathbb{E}_{a(\bar{x})}\min_{a'} \|a(\bar{x}) - a'(\bar{x}')\|_F
    \nonumber\\
    &= \mathbb{E}_{\bar{\xi}}\min_{a'} \Big[\sum_s \sum_{s(\bar{\xi}_{j,\ell})=s, i\in [3]} \big(\bar{\xi}_{j,\ell}^{(i)} - a'(\bar{x}')_{j,\ell}^{(i)}\big)^2\Big]^{1/2}
    \nonumber\\
    &\leq 
    \Big[\sum_{j,\ell \in [d], i\in [3]} \mathbb{E}_{\bar{\xi}_{j,\ell}^{(i)}\sim \eta_s^{(i)}} \big(\bar{\xi}_{j,\ell}^{(i)} - \mu_s^{(i)}\big)^2\Big]^{1/2}
    \nonumber\\
    &\quad 
    + \Big[\sum_{j,\ell \in [d], i\in [3]} \mathbb{E}_{\bar{\xi'}_{j,\ell}^{(i)}\sim \eta_{s_{\max}}^{(i)}} (\bar{\xi'}_{j,\ell}^{(i)} - \mu_{s_{\max}}^{(i)})^2\Big]^{1/2}
    \nonumber\\
    &\quad +\Big[\sum_{s(\bar{\xi}_{j,\ell})\neq s_{\max}, i\in [3]} 
    (\mu_s^{(i)} - \mu_{s_{\max}}^{(i)})^2\Big]^{1/2}
    \nonumber\\
    &:= 2\sigma + \Big[\sum_{j,\ell \in [d]} \boldsymbol{1}[s(\bar{\xi}_{j,\ell})\neq s_{\max}] \cdot \Delta\mu^2\Big]^{1/2},
    \label{eq::multi_semantic}
\end{align}
where $s_{\max}$ denotes the majority semantic label that most pixels in $a(\bar{x})$ have. Compared with the single-semantic case \eqref{eq::single_semantic}, there is an additional bias term in \eqref{eq::multi_semantic} resulting in a larger overall distance.
Moreover, the larger the crop size, the larger the probability that the cropping area intersects with the partition boundary, leading to a higher probability that $a(\bar{x})$ has more semantic labels and larger $\sum_{j,\ell \in [d]} \boldsymbol{1}[s(\bar{\xi}_{j,\ell})\neq s_{\max})]$. Consequently, a larger crop size (or larger scale parameter $\delta$) results in larger value of $\mathbb{E}_c \mathbb{E}_{\bar{x},\bar{x}'\sim \rho_c} \mathbb{E}_{a} \min_{a'} \|a(\bar{x}) - a'(\bar{x}')\|$.

Likewise, for the maximum distance between same-image different augmentations, with a smaller crop size, it is more likely for $a(\bar{x})$ to have different pixel-level semantic labels with the least alike augmentation $a'(\bar{x})$, and therefore having larger bias term and the overall distance $\mathbb{E}_c\mathbb{E}_{\bar{x}' \sim \rho_{c}} \max_{a,a'} \|a(\bar{x}') - a'(\bar{x}')\|]$.

\subsection{Impacts of Color Distortion}\label{sec::color}

The random color distortion used for data augmentation usually contains adjustments of brightness, contrast, saturation, and hue. For simplicity, we use brightness manipulation of each channel as an example to represent the random augmentations w.r.t. colors.
Specifically, based on the pixel-level formulation, given the brightness adjustment parameter $b>0$, we formulate the random brightness distortion of an image as $f\circ a_{\mathrm{color}}$
\begin{align}
    \xi^{(i)} \circ a_{\mathrm{color}}(u,v) = \lambda^{(i)} \cdot \xi^{(i)}(u,v),
\end{align} 
where $\lambda^{(i)} \in \mathrm{Unif}(0,b]$, $i\in[3]$.

With combining $a_{\mathrm{color}}$ and $a_{\mathrm{crop}}$, if $a(\bar{x})$ contains only same-semantic label pixels (with semantic label $s$), then we have
\begin{align}
    &\mathbb{E}_{a(\bar{x})}\min_{a'} \|a(\bar{x}) - a'(\bar{x}')\|_F
    \nonumber\\
    &\leq \Big[\sum_{j,\ell \in [d], i\in [3]} \mathbb{E}_{\bar{\xi}_{j,\ell}^{(i)} \sim \eta_s^{(i)}} \big(\bar{\xi}_{j,\ell}^{(i)} - \mu_s^{(i)} \big)^2 \Big]^{1/2}
    \nonumber\\
    &\quad+ \Big[\sum_{j,\ell \in [d], i\in [3]} \mathbb{E}_{\bar{\xi'}_{j,\ell}^{(i)} \sim \eta_s^{(i)}} \big(\lambda^{(i)} \cdot \bar{\xi'}_{j,\ell}^{(i)} - \mu_s^{(i)} \big)^2\Big]^{1/2}
    \nonumber\\
    &= \Big[d^2 \cdot \sum_{i\in[3]} {\sigma_s^{(i)}}^2\Big]^{1/2} := \sigma,
    \label{eq::single_semantic2}
\end{align}
where the equality holds by taking $\lambda^{(i)} = \mu_s^{(i)}/\bar{\xi}_{j,\ell}^{(i)}$.
Compared with \eqref{eq::single_semantic}, the combination of color distortion further reduces the minimum same-class different-image distance by half.
Similarly, when the pixels in $a(\bar{x})$ have more than one semantic labels, we have
\begin{align}
    &\mathbb{E}_{a(\bar{x})}\min_{a'} \|a(\bar{x}) - a'(\bar{x}')\|_F
    \nonumber\\
    &\leq \Big[\sum_{j,\ell \in [d], i\in [3]} \mathbb{E}_{\bar{\xi}_{j,\ell}^{(i)}\sim \eta_s^{(i)}} 
    \big(\bar{\xi}_{j,\ell}^{(i)} - \mu_s^{(i)}\big)^2\Big]^{1/2}
    \nonumber\\
    &\quad + \Big[\sum_{j,\ell \in [d], i\in [3]} \mathbb{E}_{\bar{\xi}_{j,\ell}^{(i)}\sim \eta_{s_{\max}}^{(i)}} (\lambda^{(i)} \bar{\xi'}_{j,\ell}^{(i)} - \mu_{s_{\max}}^{(i)})^2\Big]^{1/2}
    \nonumber\\
    &\quad +\Big[\sum_{s(\bar{\xi}_{j,\ell})\neq s_{\max}, i\in [3]} 
    (\mu_s^{(i)} - \mu_{s_{\max}}^{(i)})^2\Big]^{1/2}
    \nonumber\\
    &:= \sigma + \Big[\sum_{j,\ell \in [d]} \boldsymbol{1}[s(\bar{\xi}_{j,\ell})\neq s_{\max}] \cdot \Delta\mu^2\Big]^{1/2},
    \label{eq::multi_semantic2}
\end{align}
where the last equation holds by taking $\lambda^{(i)} = \mu_s^{(i)}/\bar{\xi}_{j,\ell}^{(i)}$.
Compared with the uniform-semantic label case, when the pixels have multiple semantic labels, the color manipulation has a less significant effect because it can only reduce the distance terms with regard to $s_{\max}$ and still fails to deal with the bias term. This also to some extent explains that without cropping, color distortion alone does not lead to good downstream performance. (As shown in Figure 5 of \citet{chen2020simple}, the linear probing accuracy of color+crop is 56.3, whereas that of color alone is merely 18.8.)

On the other hand, for the maximum same-image distance, by \eqref{eq::multi_semantic}, color distortion enhances the bias term, because if $b>1$, we have
\begin{align}
    &\max_{a,a'}\|a(\bar{x})-a'(\bar{x})\|_F
    \nonumber\\
    &= \max_{\lambda^{(i)} \in (0,b]}\Big[\sum_{j,\ell \in [d], i \in [3]} \big(\bar{\xi}_{j,\ell}^{(i)} - \lambda^{(i)}\bar{\xi'}_{j,\ell}^{(i)}\big)^2\Big]^{1/2}
    \nonumber\\
    &\geq \Big[\sum_{j,\ell \in [d], i \in [3]} \big(\bar{\xi}_{j,\ell}^{(i)} - \bar{\xi'}_{j,\ell}^{(i)}\big)^2\Big]^{1/2}.
\end{align}

\section{Verification Experiments}\label{sec::exp}
In this section, we aim to verify our theoretical conclusions through empirical experiments. Specifically, in Section \ref{sec::exp_tradeoff}, we verify the trade-off between the two distance terms induced by data augmentation in Theorem \ref{thm::main}. Then in Section \ref{sec::exp_opt}, we show that the parameters of data augmentation minimizing the two distance terms coincides with the optimal parameters for downstream accuracy, which in turn verifies the effectiveness of Theorem \ref{thm::main_improve}. 

\textbf{Experimental Setup.} 
We conduct numerical comparisons on the CIFAR-100 and TinyImagenet benchmark datasets. For conciseness of presentation, we delay the figures regarding TinyImagenet to Appendix \ref{app::tiny}.
We follow the experimental settings of SimCLR \cite{chen2020simple}.
Specifically, we use ResNet-18 as the backbone and a 2-layer MLP as the projection head. We set the batch size as $1024$ and use $1000$ epochs for training representations. We use the SGD optimizer with the learning rate $0.5$ decayed at the $700$-th, $800$-th, and $900$-th epochs with a weight decay $0.1$. We run all experiments on an NVIDIA GeForce RTX 3090 24GB GPU.

The data augmentations we use are random resized crop, random horizontal flip, random color jitter, and random grayscale.
The default crop size $\in [0.2,1.0]$, the default flip probability is $0.5$, the default color probability is $0.8$, and the default gray probability is $0.2$.We evaluate the self-supervised learned representation through linear probing, i.e., we train a linear classifier on top of the encoder for $100$ epochs and report its test accuracy.

\subsection{Verification of the Trade-off between Two Distances}\label{sec::exp_tradeoff}

\subsubsection{Pixel-Level Verification}

We first conduct pixel-level verifications. Specifically, under certain augmentation parameters, for $n$ original input images, we first create $2n$ random augmented views. Note that to study the effect of a certain type of data augmentation, we vary its parameters (range of crop size and probability of color jitter) and set the default parameters for other types of augmentations. Then for each class, we calculate the minimum pixel-level distances between same-class (different) images and the maximum pixel-level distances between the two augmentations of the same image. We report the average maximum and minimum distances over all classes on CIFAR-100 and TinyImagenet in Figures \ref{fig::pixel_cifar100} and \ref{fig::pixel_tiny}.

\begin{figure}[!h]
	\centering
        \hspace{-1mm}
	\subfigure[Various crop size.]{
		\includegraphics[width=0.48\linewidth]{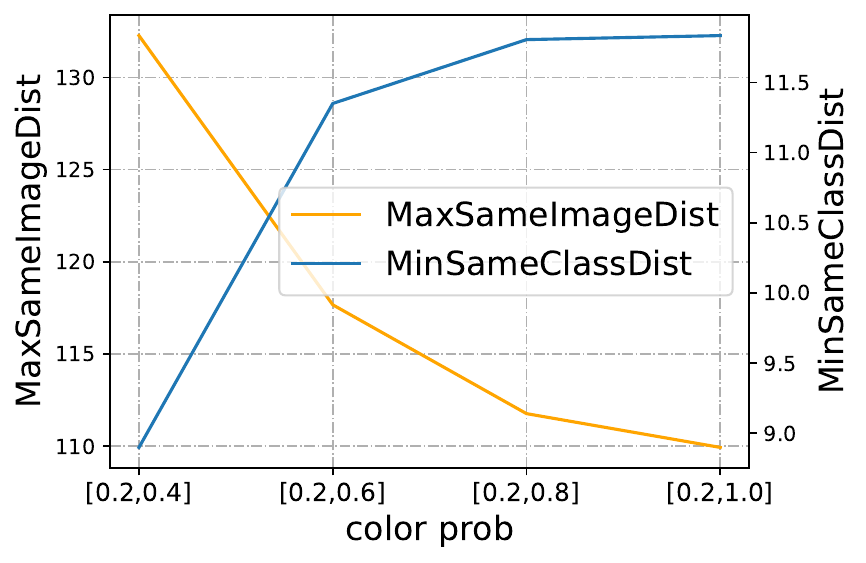}
		\label{fig::pixel_cifar100_crop}
	}
	\hspace{-4mm}
	\subfigure[Various color probability.]{
		\includegraphics[width=0.48\linewidth]{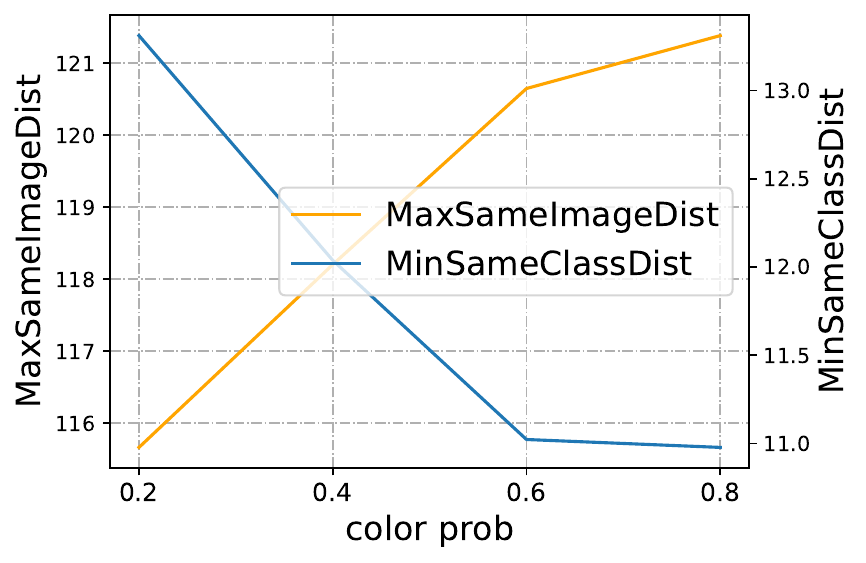}
		\label{fig::pixel_cifar100_color}
	}
        \hspace{-2mm}
        \vspace{-2mm}
	\caption{Pixel-level maximum distance between same-class different-image augmentations and minimum distance between same-image data augmentations on CIFAR-100.}
	\label{fig::pixel_cifar100}
	\vspace{-3mm}
\end{figure}

In Figures \ref{fig::pixel_cifar100} and \ref{fig::pixel_tiny}, we show that as the augmentation strength increases (smaller range of crop size and higher probability of color jitter), the maximum distance between augmentations of the same image increases, whereas the minimum distance between same-class different images decreases. According to the theoretical discussions in Section \ref{sec::impacts}, smaller crops reduce the minimum same-class distance by avoiding intersections with the semantic boundaries, and color distortion further reduces the distance for crops with pixels sharing the same semantic label. Moreover, smaller crops increase the maximum same-image distance by creating views with non-overlapping semantic labels, and color distortion further increases it by enhancing the bias term. 

\subsubsection{Representation-Level Verification}

We also conduct empirical verification by calculating the distances in the embedding space. Specifically, we train SimCLR models with various parameters of data augmentation (range of crop size and probability of
color jitter) and set the default parameters for other types
of augmentations. Then for each class, we calculate the minimum distances between same-class (different) images and the maximum distances between the two augmentations of the same image in the embedding space with training epoch from 1 to 1000. We report the average maximum and minimum distances over all classes on CIFAR-100 in Figures \ref{fig::max_cifar100} and \ref{fig::min_cifar100}, and those on TinyImagenet in Figures \ref{fig::max_tiny} and \ref{fig::min_tiny}. Note that for better visualization, we plot the moving average of the distance curves with a window=100 epochs.

\begin{figure}[!h]
	\vspace{-2mm}
	\centering
	\subfigure[Various crop size.]{
		\includegraphics[width=0.45\linewidth]{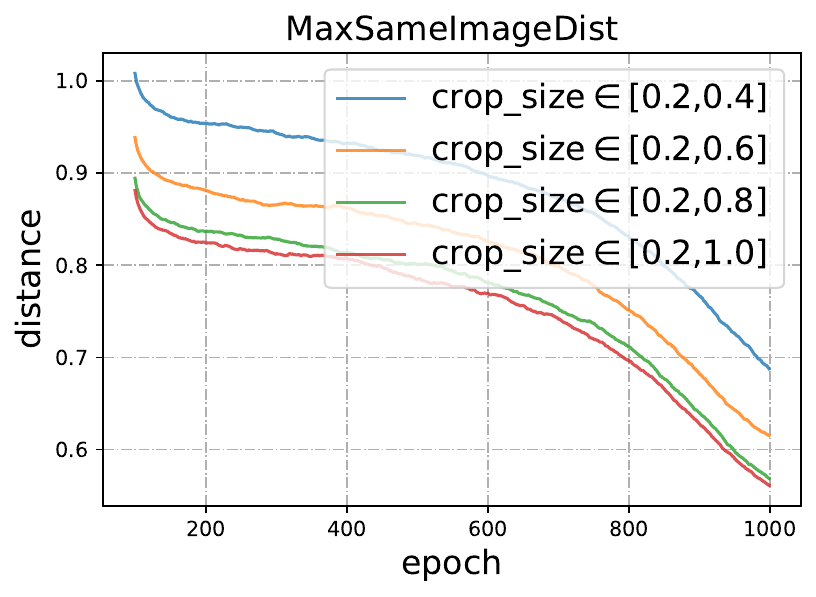}
		\label{fig::max_cifar100_crop}
	}
	\subfigure[Various color probability.]{
		\includegraphics[width=0.45\linewidth]{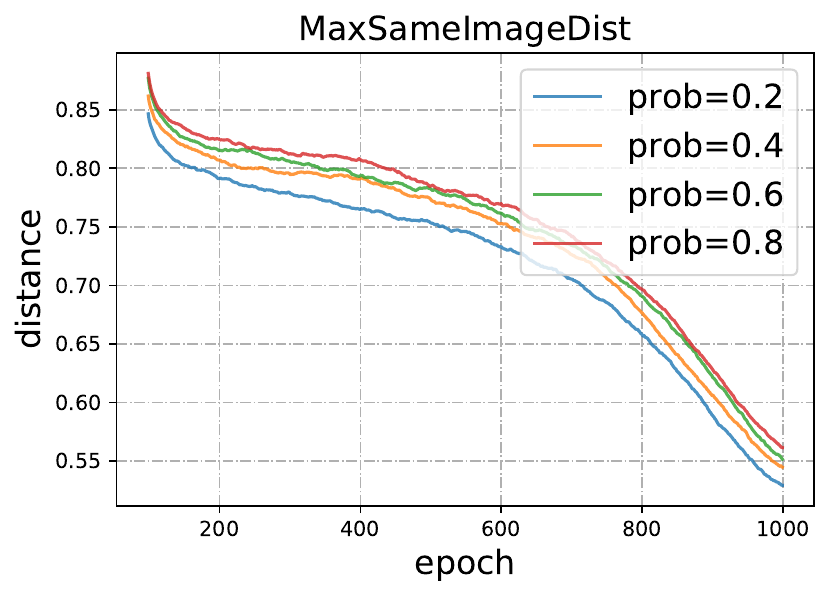}
		\label{fig::max_cifar100_color}
	}
        \vspace{-2mm}
	\caption{Representation-level maximum distance between same-class different-image augmentations on CIFAR-100.}
	\label{fig::max_cifar100}
	\vspace{-3mm}
\end{figure}

\begin{figure}[!h]
	\vspace{-2mm}
	\centering
	\subfigure[Various crop size.]{
		\includegraphics[width=0.45\linewidth]{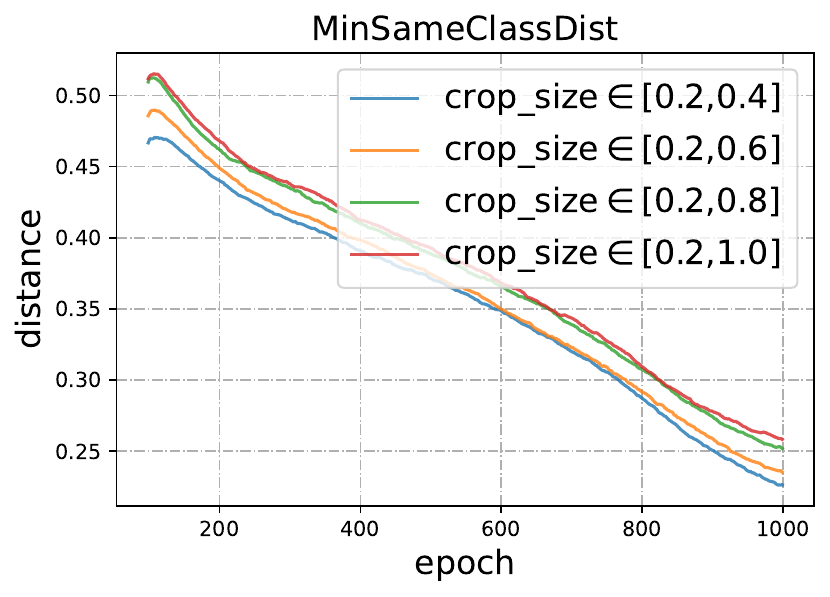}
		\label{fig::min_cifar100_crop}
	}
	\subfigure[Various color probability.]{
		\includegraphics[width=0.45\linewidth]{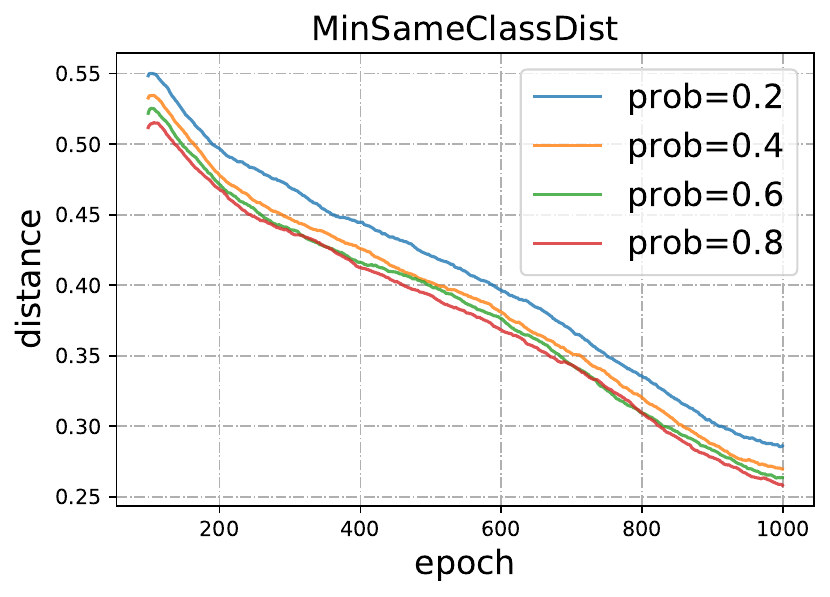}
		\label{fig::min_cifar100_color}
	}
    \vspace{-2mm}
	\caption{Representation-level minimum distance between different same-image data augmentations on CIFAR-100.}
	\label{fig::min_cifar100}
	\vspace{-3mm}
\end{figure}

From Figures \ref{fig::max_cifar100}, \ref{fig::min_cifar100}, \ref{fig::max_tiny}, and \ref{fig::min_tiny}, we observe that through training, both distance values become smaller. Besides, at the beginning of the training stage, the rank of the representation-level distances w.r.t. augmentation parameters coincides with that of the pixel level, i.e., as the augmentation strength increases (smaller range of crop size and higher probability of color jitter), the maximum distance between augmentations of the same image increases, whereas the minimum distance between same-class different images decreases. Moreover, during training, this trend maintains till convergence.

\subsection{Verification of Optimal Parameter}\label{sec::exp_opt}
We run experiments with various augmentation parameters including the range of random crop size and the probability of random color jitter on the CIFAR-100 dataset. We report the sum of the two distance terms against training epochs in Figure \ref{fig::sum_cifar100}. The curves are smoothed by taking the moving average over 100 epochs.
Besides, in Figure \ref{fig::acc_cifar100}, we show the linear probing accuracy of the unsupervised contrastive learning representations trained with various augmentation parameters. The results on TinyImagenet are shown in Figures \ref{fig::sum_tiny} and \ref{fig::acc_tiny}.

\begin{figure}[!h]
	\vspace{-2mm}
	\centering
	\subfigure[Various crop size.]{
		\includegraphics[width=0.45\linewidth]{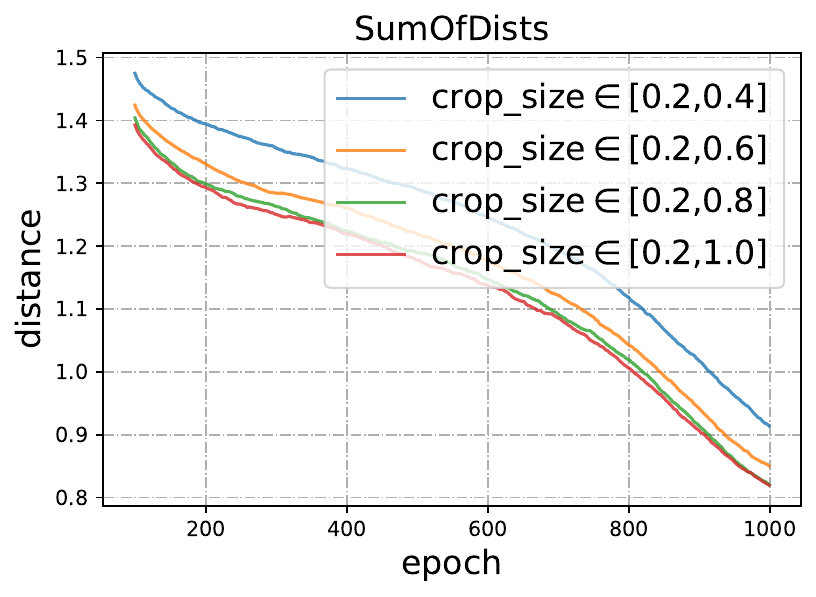}
		\label{fig::sum_cifar100_crop}
	}
	\subfigure[Various color probability.]{
		\includegraphics[width=0.45\linewidth]{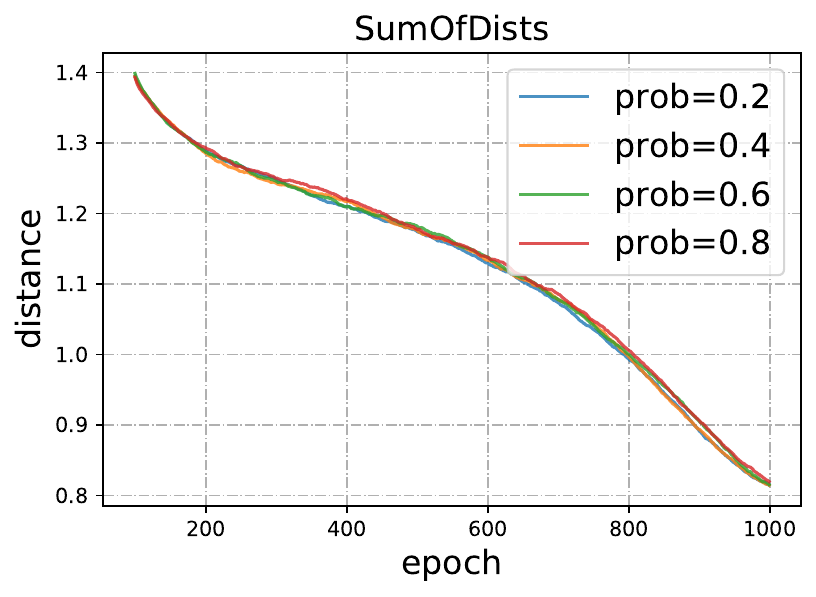}
		\label{fig::sum_cifar100_color}
	}
    \vspace{-2mm}
	\caption{Sum of the two distance terms under various data augmentations in the embedding space on CIFAR-100.}
	\label{fig::sum_cifar100}
	\vspace{-3mm}
\end{figure}

\begin{figure}[!h]
	\vspace{-2mm}
	\centering
	\subfigure[Various crop size.]{
		\includegraphics[width=0.45\linewidth]{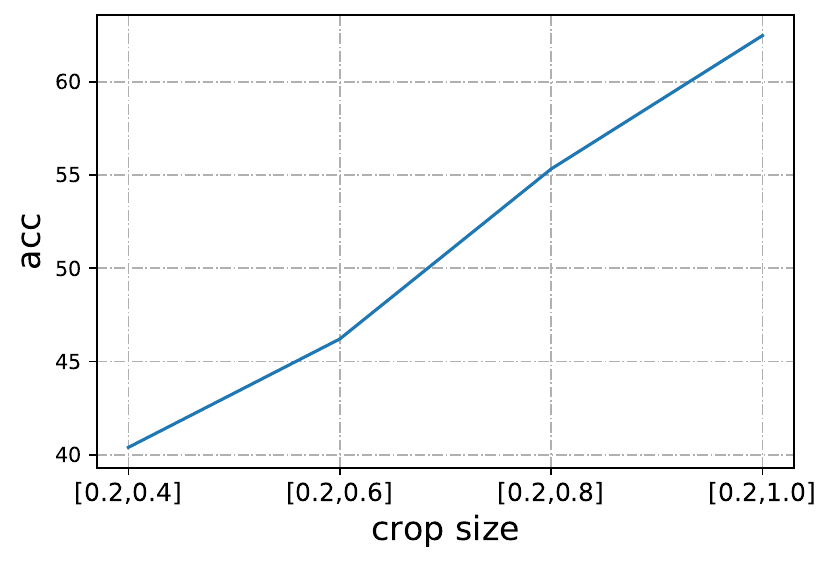}
		\label{fig::acc_cifar100_crop}
	}
    \vspace{-2mm}
	\subfigure[Various color probability.]{
		\includegraphics[width=0.45\linewidth]{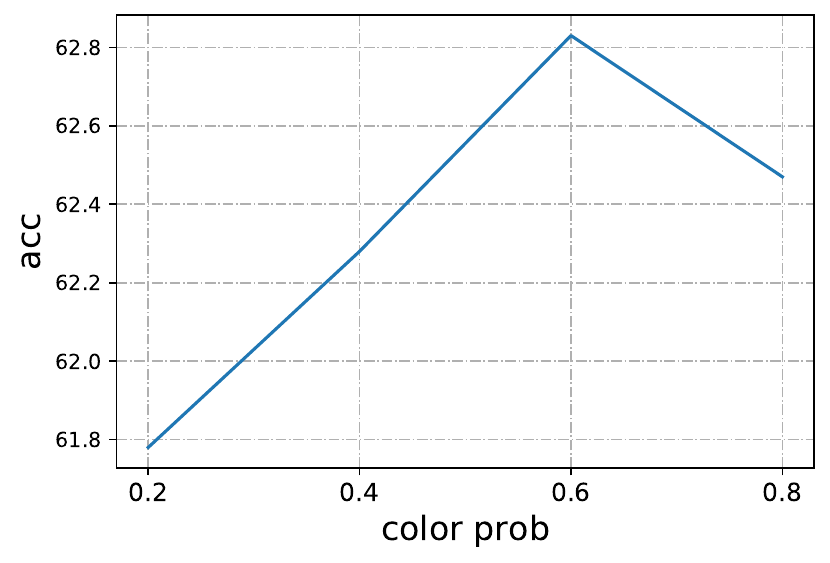}
		\label{fig::acc_cifar100_color}
	}
    \vspace{-2mm}
	\caption{Linear probing accuracy under different data augmentation parameters on CIFAR-100.}
	\label{fig::acc_cifar100}
	\vspace{-3mm}
\end{figure}

According to Figures \ref{fig::sum_cifar100}, \ref{fig::acc_cifar100}, \ref{fig::sum_tiny}, and \ref{fig::acc_tiny}, we observe that the optimal augmentation parameter with the smallest distance sum also leads to the highest downstream accuracy. This verifies Theorem \ref{thm::main_improve}, which indicates that the downstream supervised risk is guaranteed by the sum of the maximum distance between same-class different-image augmentations and the minimum distance between different same-image data augmentations in the embedding space.

\section{Conclusion}\label{sec::conclusion}

In this paper, by proposing an augmentation-aware error bound, we establish that the supervised risk is not only influenced by the unsupervised risk but also explicitly shaped by a trade-off induced by data augmentation. Under a novel semantic label assumption, we further analyze how specific augmentation methods impact this bound. Moreover, we empirically verify the theoretical conclusion on the trade-offs of data augmentation. We believe our study lays a foundation for further theoretically exploring data augmentation techniques in contrastive learning.

\section*{Acknowledgements}
Yisen Wang was supported by National Key R\&D Program of China (2022ZD0160300), National Natural Science Foundation of China (92370129, 62376010), and Beijing Nova Program (20230484344, 20240484642).

\section*{Impact Statement}

This paper presents work whose goal is to advance the field of 
Machine Learning. There are many potential societal consequences 
of our work, none of which we feel must be specifically highlighted here.

\bibliography{CLT_main}
\bibliographystyle{icml2025}

\newpage
\appendix
\onecolumn

\section{Related Works}\label{app::related}

\subsection{Self-Supervised Contrastive Learning}
Self-supervised contrastive learning \citep{chen2020simple, chen2020improved, he2020momentum, chen2021empirical} aims to train an encoder that maps different augmentations of the same input to similar feature representations, while ensuring that augmentations from distinct inputs result in distinct features. Once the encoder is pre-trained, it can be fine-tuned on a specific downstream task. 
Contrastive learning methods can be broadly divided into two categories based on their use of negative samples.
The first category \citep{chen2020simple, chen2020improved, he2020momentum} involves learning the encoder by aligning an anchor point with its augmented versions (positive samples) while explicitly pushing apart other samples (negative samples).
The second category does not rely on negative samples and often incorporates additional components, such as projectors \citep{grill2020bootstrap}, stop-gradient techniques \citep{chen2021exploring}, or high-dimensional embeddings \citep{zbontar2021barlow}. Despite this, the first category remains the dominant approach in self-supervised contrastive learning and has been applied to a wide range of domains \citep{khaertdinov2021contrastive, aberdam2021sequence, lee2022learning}.
This paper primarily analyzes and discusses the first category of contrastive learning methods, which depend on both positive and negative samples.

\subsection{Theory of Contrastive Learning}
The early studies of theoretical aspects of contrastive learning manage to link contrastive learning to the supervised downstream classification. \citet{arora2019theoretical} first proposed the CURL framework where the positive samples are generated from the same latent class. 
They proved that the downstream supervised risk of a mean classifier can be bounded by the unsupervised contrastive risk and a class collision term.
Under the CURL framework, succeeding studies further extended this bound and incorporated the effect of negative samples \cite{ash2022investigating, bao2022surrogate}.
Moreover, \citet{lei2023generalization} improved the results of \citet{arora2019theoretical} by deriving a tighter generalization bound.
However, the CURL framework has long been criticized as having unrealistic generation process of positive samples\cite{wang2021chaos}.
\citet{nozawa2021understanding} formulated the positive samples from the perspective of data augmentations, whereas they considered the corresponding term as a relative constant without conducting further analysis.

Later on, \cite{haochen2021provable} motivated from the unsupervised nature of contrastive learning by proposing the concept of the augmentation graph, where the vertices are all augmented images and the edge weights representing the probability of two augmented views originating from the same original image. They borrowed the mathematical tools from spectral clustering to build generalization guarantees for their proposed spectral contrastive learning. 
The theoretical framework was later extended to contrastive learning for unsupervised domain adaption \cite{shen2022connect}, multi-modal learning \cite{zhang2023generalization}, and weakly supervised learning \cite{cui2023rethinking}.
In a similar vein, \citet{wang2021chaos} propose the idea of augmentation overlap to explain the alignment of positive samples. 
However, these modelings of data augmentation measure the case where two original images sharing the same view, whereas this is empirically hard to realize, especially with only two views used in contrastive learning instead of multiple ones.

Besides, contrastive learning is also interpreted through various other theoretical frameworks in unsupervised learning. For example, \citet{wang2020understanding} explained the contrastive learning through alignment of positive samples and uniformity of negative samples. 
\citet{zimmermann2021contrastive} showed that training with InfoNCE inverts the data generating process through establishing a theoretical connection between InfoNCE and nonlinear independent component analysis (ICA).
\citet{ko2022revisiting} built the relationship between contrastive learning and neighborhood component analysis (NCA) and developed new contrastive losses. 
\citet{hu2022your} interpreted contrastive learning as a type of  stochastic neighbor embedding (SNE) methods. 
\citet{wang2023message} showed the learning dynamics of contrastive learning corresponds to a specific message passing scheme on the corresponding augmentation graph.
\citet{wu2024understanding} explained the tolerance of contrastive learning towards sampling bias via the perspective of distributionally robust optimization (DRO).
Nonetheless, the role of data augmentation is still under-exploited in the existing theoretical frameworks, especially without mathematically analyzing specific data augmentation methods.

\section{Proofs}\label{app::proof}

\subsection{Proof of Error Decomposition}

\begin{proof}[Proof of Theorem \ref{thm::decomp}]
	According to the definition, we have
	\begin{align}
		\mathcal{R}^{\mathrm{un}}(f) &= \mathbb{E}_{c,\{c_k\}_{k\in[K]}} \mathbb{E}_{\bar{x}\sim \rho_c, \bar{x}_k \sim \rho_{c_k}} \mathbb{E}_{a,a',\{a_k\}_{k\in[K]}} \mathcal{L}(a(\bar{x}), a'(\bar{x}), a_k(\bar{x}_k);f) 
		\nonumber\\
		&= \mathbb{E}_c\mathbb{E}_{\bar{x}\sim \rho_c} \mathbb{E}_{a} \sum_{i_1=1}^C \cdots \sum_{i_K=1}^C \mathrm{P}(c_1=i_1) \cdots \mathrm{P}(c_K=i_K) \cdot \mathbb{E}_{\bar{x}_k \sim \rho_{i_k}} \mathbb{E}_{a',\{a_k\}_{k\in[K]}} \mathcal{L}(a(\bar{x}), a'(\bar{x}), a_k(\bar{x}_k);f) 
		\nonumber\\
		&= \mathbb{E}_c\mathbb{E}_{\bar{x}\sim \rho_c} \mathbb{E}_{a} \Big(\sum_{i_1 \neq c} + \sum_{i_1=c}\Big) \cdots \Big(\sum_{i_K \neq c} + \sum_{i_K=c}\Big) \mathrm{P}(c_1=i_1) \cdots \mathrm{P}(c_K=i_K) 
		\nonumber\\
		&\qquad\qquad\qquad\qquad\qquad\qquad\qquad\qquad\qquad\quad\ \ \cdot \mathbb{E}_{\bar{x}_k \sim \rho_{i_k}} \mathbb{E}_{a',\{a_k\}_{k\in[K]}} \mathcal{L}(a(\bar{x}), a'(\bar{x}), a_k(\bar{x}_k);f)
		\nonumber\\
		&= \mathbb{E}_c\mathbb{E}_{\bar{x}\sim \rho_c} \mathbb{E}_{a} \sum_{i_1\neq c} \cdots \sum_{i_K\neq c} \mathrm{P}(c_1=i_1) \cdots \mathrm{P}(c_K=i_K) \cdot \mathbb{E}_{\bar{x}_k \sim \rho_{i_k}, k\in[K]} \mathbb{E}_{a',\{a_k\}_{k\in[K]}} \mathcal{L}(a(\bar{x}), a'(\bar{x}), a_k(\bar{x}_k);f) 
		\nonumber\\
		&+ \mathbb{E}_c\mathbb{E}_{\bar{x}\sim \rho_c} \mathbb{E}_{a} \sum_{j=1}^K \sum_{i_1\neq c} \cdots \sum_{i_{j-1} \neq c} \cdot \sum_{i_{j+1}\neq c} \cdots \sum_{i_K\neq c} \mathrm{P}(c_1=i_1) \cdots \mathrm{P}(c_j=c) \cdots \mathrm{P}(c_K=i_K) 
		\nonumber\\
		&\qquad\qquad\qquad\qquad\qquad\qquad\qquad\qquad\qquad\quad\cdot \mathbb{E}_{\bar{x}_j \sim \rho_{c}} \mathbb{E}_{\bar{x}_k \sim \rho_{i_k}, k\neq j} \mathbb{E}_{a',\{a_k\}_{k\in[K]}} \mathcal{L}(a(\bar{x}), a'(\bar{x}), a_k(\bar{x}_k);f) 
		\nonumber\\
		&\cdots
		\nonumber\\
		&+ \mathbb{E}_c\mathbb{E}_{\bar{x}\sim \rho_c} \mathbb{E}_{a} \sum_{j=1}^K \sum_{i_j\neq c} \mathrm{P}(c_1=c) \cdots \mathrm{P}(c_j=i_j) \cdots \mathrm{P}(c_K=c) 
		\nonumber\\
		&\qquad\qquad\qquad\qquad\quad \cdot \mathbb{E}_{\bar{x}_k \sim \rho_{c}, k\neq j} \mathbb{E}_{\bar{x}_j \sim \rho_{i_j}} \mathbb{E}_{a',\{a_k\}_{k\in[K]}} \mathcal{L}(a(\bar{x}), a'(\bar{x}), a_k(\bar{x}_k);f) 
		\nonumber\\
		&+ \mathbb{E}_c\mathbb{E}_{\bar{x}\sim \rho_c} \mathbb{E}_{a} \mathrm{P}(c_1=c) \cdots \mathrm{P}(c_K=c) \cdot \mathbb{E}_{\bar{x}_k \sim \rho_{c}, k\in [K]} \mathbb{E}_{a',\{a_k\}_{k\in[K]}} \mathcal{L}(a(\bar{x}), a'(\bar{x}), a_k(\bar{x}_k);f)
		\nonumber\\
		&= \mathbb{E}_c\mathbb{E}_{\bar{x}\sim \rho_c} \mathbb{E}_{a} \sum_{i_1\neq c} \cdots \sum_{i_K\neq c} \mathrm{P}(c_1=i_1) \cdots \mathrm{P}(c_K=i_K) \cdot \mathbb{E}_{\bar{x}_k \sim \rho_{i_k}, k\in[K]} \mathbb{E}_{a',\{a_k\}_{k\in[K]}} \mathcal{L}(a(\bar{x}), a'(\bar{x}), a_k(\bar{x}_k);f) 
		\nonumber\\
		&+ \mathbb{E}_c\mathbb{E}_{\bar{x}\sim \rho_c} \mathbb{E}_{a} \sum_{i_1\neq c} \cdots \sum_{i_{j-1} \neq c} \cdot \sum_{i_{j+1}\neq c} \cdots \sum_{i_K\neq c} \sum_{j=1}^K  \mathrm{P}(c_1=i_1) \cdots \mathrm{P}(c_j=c) \cdots \mathrm{P}(c_K=i_K) 
		\nonumber\\
		&\qquad\qquad\qquad\qquad\qquad\qquad\qquad\qquad\qquad\quad\cdot \mathbb{E}_{\bar{x}_j \sim \rho_{c}} \mathbb{E}_{\bar{x}_k \sim \rho_{i_k}, k\neq j} \mathbb{E}_{a',\{a_k\}_{k\in[K]}} \mathcal{L}(a(\bar{x}), a'(\bar{x}), a_k(\bar{x}_k);f) 
		\nonumber\\
		&\cdots
		\nonumber\\
		&+ \mathbb{E}_c\mathbb{E}_{\bar{x}\sim \rho_c} \mathbb{E}_{a} \sum_{i_j\neq c} \sum_{j=1}^K \mathrm{P}(c_1=c) \cdots \mathrm{P}(c_j=i_j) \cdots \mathrm{P}(c_K=c) 
		\nonumber\\
		&\qquad\qquad\qquad\qquad\quad \cdot \mathbb{E}_{\bar{x}_k \sim \rho_{c}, k\neq j} \mathbb{E}_{\bar{x}_j \sim \rho_{i_j}} \mathbb{E}_{a',\{a_k\}_{k\in[K]}} \mathcal{L}(a(\bar{x}), a'(\bar{x}), a_k(\bar{x}_k);f) 
		\nonumber\\
		&+ \mathbb{E}_c\mathbb{E}_{\bar{x}\sim \rho_c} \mathbb{E}_{a} \mathrm{P}(c_1=c) \cdots \mathrm{P}(c_K=c) \cdot \mathbb{E}_{\bar{x}_k \sim \rho_{c}, k\in [K]} \mathbb{E}_{a',\{a_k\}_{k\in[K]}} \mathcal{L}(a(\bar{x}), a'(\bar{x}), a_k(\bar{x}_k);f).
	\end{align}
	
	Note that the loss $\mathcal{L}(a(\bar{x}), a'(\bar{x}), a_k(\bar{x}_k);f)$ is symmetric w.r.t. negative samples. Specifically, denoting $\{i_1,\ldots, i_K\}$ as a rearrangement of $[K]=\{1,\ldots,K\}$, we have
	\begin{align}
		\mathcal{L}(a(\bar{x}), a'(\bar{x}), a_k(\bar{x}_k);f)
		&= \log\Big(1 + \sum_{k=1}^K \exp\big(-f(a(\bar{x}))^\top [f(a'(\bar{x})) - f(a_k(\bar{x}_k))]\big)\Big)
		\nonumber\\
		&= \log\Big(1 + \sum_{k=i_1}^{i_K} \exp\big(-f(a(\bar{x}))^\top [f(a'(\bar{x})) - f(a_k(\bar{x}_k))]\big)\Big).
	\end{align}
	Therefore, for $k\in[K]$, given $i_1,\ldots,i_k$, by denoting
	\begin{align}
		r_k(i_1,\ldots,i_k) := \mathbb{E}_{\bar{x}_1 \sim \rho_{i_1}} \cdots \mathbb{E}_{\bar{x}_k \sim \rho_{i_k}} \mathbb{E}_{\bar{x}_{k+1}, \ldots, \bar{x}_{K} \sim \rho_{c}} \mathbb{E}_{a',\{a_k\}_{k\in[K]}} \mathcal{L}(a(\bar{x}), a'(\bar{x}), a_k(\bar{x}_k);f),
	\end{align}
	and 
	\begin{align}
		p_k(i_1,\ldots,i_k) := \mathrm{P}(\exists \{j_1,\ldots,j_K\}, \text{such that } c_{j_1}=i_1, \ldots, c_{j_k}=i_k, \text{and } c_{j_{k+1}=\cdots=c_K=c}),
	\end{align}
	where $\{j_1,\ldots,j_K\}$ is a rearrangement of $[K]=\{1,\ldots,K\}$,
	we have
	\begin{align}
		\mathcal{R}^{\mathrm{un}}(f) &= \mathbb{E}_c\mathbb{E}_{\bar{x}\sim \rho_c} \mathbb{E}_{a} \sum_{i_1\neq c} \cdots \sum_{i_K\neq c} p_K(i_1,\ldots,i_K) r_K(i_1,\ldots,i_K)
		\nonumber\\
		&+ \mathbb{E}_c\mathbb{E}_{\bar{x}\sim \rho_c} \mathbb{E}_{a} \sum_{i_1\neq c} \cdots \sum_{i_{j-1} \neq c} \cdot \sum_{i_{j+1}\neq c} \cdots \sum_{i_K\neq c} p_{K-1}(i_1,\ldots,i_{j-1},i_{j+1}\ldots i_K) r_{K-1}(i_1,\ldots,i_{j-1},i_{j+1}\ldots i_K)
		\nonumber\\
		&\cdots
		\nonumber\\
		&+ \mathbb{E}_c\mathbb{E}_{\bar{x}\sim \rho_c} \mathbb{E}_{a} \sum_{i_j\neq c} p_1(i_j) r_1(i_j)
		+ \mathbb{E}_c\mathbb{E}_{\bar{x}\sim \rho_c} \mathbb{E}_{a} p_0r_0.
	\end{align}
\end{proof}

\begin{corollary}[Error decomposition of $\bar{\mathcal{R}}^{\mathrm{sup}}$]\label{cor::decomp}
	We have 
	\begin{align}
		&\bar{\mathcal{R}}^{\mathrm{sup}}(f)
		\nonumber\\
		&= \mathbb{E}_c\mathbb{E}_{\bar{x}\sim \rho_c} \sum_{i_1\neq c} \cdots \sum_{i_K\neq c} p_K(i_1,\ldots,i_K) r_K^{\mathrm{sup}}(i_1,\ldots,i_K)
		\nonumber\\
		&\cdots
		\nonumber\\
		&+ \mathbb{E}_c\mathbb{E}_{\bar{x}\sim \rho_c} \sum_{i_j\neq c} p_1(i_j) r_1^{\mathrm{sup}}(i_j)
		+ \mathbb{E}_c\mathbb{E}_{\bar{x}\sim \rho_c} p_0r_0^{\mathrm{sup}},
	\end{align}
	where $r_k^{\mathrm{sup}}(i_1,\ldots,i_k):= \log \Big(1+\big[(K-k) + \sum_{m=1}^k \exp\big(-\mathbb{E}_{\bar{x}' \sim \rho_{c}} \mathbb{E}_{\bar{x}_m \sim \rho_{i_m}} f(\bar{x})^\top [f(\bar{x}') - f(\bar{x}_m)]\big) \big]\Big)$.
\end{corollary}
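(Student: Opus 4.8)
The goal is to prove Corollary~\ref{cor::decomp}, the error decomposition of the intermediate supervised risk $\bar{\mathcal{R}}^{\mathrm{sup}}(f)$, which should mirror the decomposition of $\mathcal{R}^{\mathrm{un}}(f)$ established in Theorem~\ref{thm::decomp}. The plan is to run the same conditioning argument used in the proof of Theorem~\ref{thm::decomp}, since the definitions of $\bar{\mathcal{R}}^{\mathrm{sup}}(f)$ and $\mathcal{R}^{\mathrm{un}}(f)$ share exactly the same outer expectation over $(c,\{c_k\}_{k=1}^K)$ and the same symmetry structure in the negative-class indices.

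First I would write out $\bar{\mathcal{R}}^{\mathrm{sup}}(f) = \mathbb{E}_{c,\{c_k\}} \mathbb{E}_{\bar{x}\sim\rho_c} \log\bigl(1 + \sum_{k=1}^K \exp(-f(\bar{x})^\top(\mu_c - \mu_{c_k}))\bigr)$ and expand the expectation over $\{c_k\}$ by summing $\sum_{i_1=1}^C \cdots \sum_{i_K=1}^C \mathrm{P}(c_1=i_1)\cdots\mathrm{P}(c_K=i_K)$, exactly as in the first display of the proof of Theorem~\ref{thm::decomp}. Then I would split each sum as $\sum_{i_k} = \sum_{i_k\neq c} + \sum_{i_k=c}$ and regroup according to how many of the $c_k$ equal $c$. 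The key observation is that when exactly $K-k$ of the indices equal $c$, the corresponding summands in $\sum_{k=1}^K \exp(-f(\bar{x})^\top(\mu_c - \mu_{c_k}))$ contribute $\exp(-f(\bar{x})^\top(\mu_c - \mu_c)) = \exp(0) = 1$ each, giving the constant $K-k$ inside the logarithm, while the remaining $k$ terms (with $c_{j_m}=i_m \neq c$) contribute $\sum_{m=1}^k \exp(-f(\bar{x})^\top(\mu_c - \mu_{i_m}))$. Writing $\mu_c = \mathbb{E}_{\bar{x}'\sim\rho_c} f(\bar{x}')$ and $\mu_{i_m} = \mathbb{E}_{\bar{x}_m\sim\rho_{i_m}} f(\bar{x}_m)$ and pulling these expectations inside gives precisely the stated $r_k^{\mathrm{sup}}(i_1,\ldots,i_k)$.

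The combinatorial bookkeeping step is where I expect the symmetry argument to carry the weight: since $\log\bigl(1 + (K-k) + \sum_{m=1}^k \exp(-\cdots)\bigr)$ depends only on the multiset $\{i_1,\ldots,i_k\}$ of non-$c$ labels (not on which positions $j_1,\ldots,j_k$ they occupy), every rearrangement of $[K]$ that places these $k$ labels in distinct slots and $c$ in the rest contributes the same value, so collecting all such rearrangements turns the joint probability $\mathrm{P}(c_1=i_1)\cdots$ summed over positions into exactly $p_k(i_1,\ldots,i_k) = \mathrm{P}(\exists\{j_1,\ldots,j_K\} : c_{j_1}=i_1,\ldots,c_{j_k}=i_k, c_{j_{k+1}}=\cdots=c_{j_K}=c)$, matching the definition used in Theorem~\ref{thm::decomp}. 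This is the same mechanism that made the $\mathcal{R}^{\mathrm{un}}$ decomposition work, and it is the ``non-trivial'' point the paper flags: it relies on the loss treating all negative samples identically.

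The main obstacle, such as it is, is purely notational rather than mathematical: managing the nested sums $\sum_{i_1\neq c}\cdots\sum_{i_K\neq c}$ down through all intermediate levels $K-1, K-2, \ldots, 1, 0$ and verifying that the ``$j$-sum'' over which position gets label $c$ (or which positions get the non-$c$ labels) collapses correctly into the $p_k$ probabilities. Since Theorem~\ref{thm::decomp} has already done this bookkeeping for the structurally identical expansion, I would not re-derive it from scratch but instead note that replacing $\mathcal{L}(a(\bar{x}), a'(\bar{x}), \{a_k(\bar{x}_k)\};f)$ by its supervised analogue $\log(1 + \sum_{k=1}^K \exp(-f(\bar{x})^\top(\mu_c - \mu_{c_k})))$, taking the expectations $\mathbb{E}_{\bar{x}'\sim\rho_c}$ and $\mathbb{E}_{\bar{x}_m\sim\rho_{i_m}}$ on the $\mu$'s, and observing the $c_{j}=c$ terms become the additive constant $K-k$, yields the claimed formula term-by-term.
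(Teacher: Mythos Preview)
Your proposal is correct and follows exactly the paper's own approach: the paper's proof of Corollary~\ref{cor::decomp} simply notes that $\log\bigl(1+\sum_{k=1}^K \exp(-f(\bar{x})^\top(\mu_c-\mu_{c_k}))\bigr)$ is symmetric in the negatives, invokes the proof of Theorem~\ref{thm::decomp} verbatim, and replaces $r_k$ by $r_k^{\mathrm{sup}}$. Your additional observation that the $c_k=c$ summands collapse to $\exp(0)=1$ and thereby produce the $(K-k)$ term inside the logarithm is exactly the identification needed, and is more explicit than what the paper writes.
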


\begin{proof}[Proof of Corollary \ref{cor::decomp}]
	Because $\log\Big(1 + \sum_{k=1}^K \exp\big(-f(\bar{x})^\top (\mu_c - \mu_{c_k})\big)\Big)$ is symmetric w.r.t. the negative samples, we follow the proof of Theorem \ref{thm::decomp}, replace $r_k(i_1,\ldots,i_k)$ with $r_k^{\mathrm{sup}}(i_1,\ldots,i_k)$, and finish the proof.
\end{proof}

\subsection{Proof of the Error Bound without Additional Assumption}

\begin{proof}[Proof of Theorem \ref{thm::rk}]
	By Jensen's inequality and the convexity of log-sum-exp, we have
	\begin{align}\label{eq::rk_decomp}
		&r_k(i_1,\ldots,i_k) 
		\nonumber\\
		&= \mathbb{E}_{\bar{x}_1 \sim \rho_{i_1}} \cdots \mathbb{E}_{\bar{x}_k \sim \rho_{i_k}} \mathbb{E}_{\bar{x}_{k+1}, \ldots, \bar{x}_{K} \sim \rho_{c}} \mathbb{E}_{a',\{a_k\}_{k\in[K]}} \mathcal{L}(a(\bar{x}), a'(\bar{x}), a_k(\bar{x}_k);f)
		\nonumber\\
		&=  \mathbb{E}_{\bar{x}_1 \sim \rho_{i_1}} \cdots \mathbb{E}_{\bar{x}_k \sim \rho_{i_k}} \mathbb{E}_{\bar{x}_{k+1}, \ldots, \bar{x}_{K} \sim \rho_{c}} \mathbb{E}_{a',\{a_k\}_{k\in[K]}} \log\Big(1 + \sum_{k=1}^{K} \exp\big(-f(a(\bar{x}))^\top [f(a'(\bar{x})) - f(a_k(\bar{x}_k))]\big)\Big)
		\nonumber\\
		&\geq \log\Big(1 + \sum_{k=1}^{K} \exp\big(-\mathbb{E}_{\bar{x}_1 \sim \rho_{i_1}} \cdots \mathbb{E}_{\bar{x}_k \sim \rho_{i_k}} \mathbb{E}_{\bar{x}_{k+1}, \ldots, \bar{x}_{K} \sim \rho_{c}} \mathbb{E}_{a',\{a_k\}_{k\in[K]}} f(a(\bar{x}))^\top [f(a'(\bar{x})) - f(a_k(\bar{x}_k))]\big)\Big)
		\nonumber\\
		&= \log\Big(1 + \sum_{m=1}^{k}  \exp\big(-\mathbb{E}_{\bar{x}_m \sim \rho_{i_m}} \mathbb{E}_{a',a_m} f(a(\bar{x}))^\top [f(a'(\bar{x})) - f(a_m(\bar{x}_m))] \big)
		\nonumber\\
		&\qquad\qquad+ \sum_{m=k+1}^{K} \exp\big(-\mathbb{E}_{\bar{x}_m \sim \rho_{c}} \mathbb{E}_{a',a_m} f(a(\bar{x}))^\top [f(a'(\bar{x})) - f(a_m(\bar{x}_m))]\big)\Big).
	\end{align}
	We note that the negative samples break into two groups: $\bar{x}_m$'s sharing the same class with $\bar{x}$, and $\bar{x}_m$'s having different classes with $\bar{x}$. 
	For the same-class terms, given $a$, $\bar{x}$, $\bar{x}_m$, and $f$, we take $a^*$ as the augmentation such that 
	\begin{align}
		a^* = \arg\min_{a} \|f(a_1(\bar{x})) - f(a(\bar{x}_k))\|,
	\end{align}
	and then we have
	\begin{align}
		&\mathbb{E}_{\bar{x}_m \sim \rho_{c}} \mathbb{E}_{a',a_m} f(a(\bar{x}))^\top [f(a'(\bar{x})) - f(a_m(\bar{x}_m))] 
		\nonumber\\
		&= \mathbb{E}_{\bar{x}_m \sim \rho_{c}} \mathbb{E}_{a',a_m} f(a(\bar{x}))^\top [f(a'(\bar{x})) - f(a^*(\bar{x}_m)) + f(a^*(\bar{x}_m)) - f(a_m(\bar{x}_m))] 
		\nonumber\\
		&= \mathbb{E}_{\bar{x}_m \sim \rho_{c}} \mathbb{E}_{a'} f(a(\bar{x}))^\top [f(a'(\bar{x})) - f(a^*(\bar{x}_m))]
		+  \mathbb{E}_{\bar{x}_m \sim \rho_{c}} \mathbb{E}_{a_m} f(a(\bar{x}))^\top [f(a^*(\bar{x}_m)) - f(a_m(\bar{x}_m))] 
		\nonumber\\
		&\leq \mathbb{E}_{\bar{x}_m \sim \rho_{c}} \mathbb{E}_{a'} \|f(a'(\bar{x})) - f(a^*(\bar{x}_m))\|
		+  \mathbb{E}_{\bar{x}_m \sim \rho_{c}} \mathbb{E}_{a_m} \|f(a^*(\bar{x}_m)) - f(a_m(\bar{x}_m))\|
		\nonumber\\
		&\leq \mathbb{E}_{\bar{x}_m \sim \rho_{c}} \mathbb{E}_{a'} \min_{a''} \|f(a'(\bar{x})) - f(a''(\bar{x}_m))\|
		+  \mathbb{E}_{\bar{x}_m \sim \rho_{c}} \max_{a',a''} \|f(a''(\bar{x}_m)) - f(a'(\bar{x}_m))\|,
		\label{eq::same_cls}
	\end{align}
	where the first inequality holds because $\|f(\cdot)\|=1$.
	
	On the other hand, for the different-class terms, we have
	\begin{align}
		&\mathbb{E}_{\bar{x}_m \sim \rho_{i_m}} \mathbb{E}_{a',a_m} f(a(\bar{x}))^\top [f(a'(\bar{x})) - f(a_m(\bar{x}_m))] 
		\nonumber\\
		&= \mathbb{E}_{\bar{x}' \sim \rho_{c}} \mathbb{E}_{\bar{x}_m \sim \rho_{i_m}} \mathbb{E}_{a',a'',a_m} f(a(\bar{x}))^\top [f(a'(\bar{x})) - f(a''(\bar{x}')) + f(a''(\bar{x}')) - f(a_m(\bar{x}_m))]
		\nonumber\\
		&= \mathbb{E}_{\bar{x}' \sim \rho_{c}} \mathbb{E}_{a',a''} f(a(\bar{x}))^\top [f(a'(\bar{x})) - f(a''(\bar{x}'))] 
		+ \mathbb{E}_{\bar{x}' \sim \rho_{c}} \mathbb{E}_{\bar{x}_m \sim \rho_{i_m}} \mathbb{E}_{a'',a_m} f(a(\bar{x}))^\top [f(a''(\bar{x}')) - f(a_m(\bar{x}_m))].
		\label{eq::diff_cls}
	\end{align}
	Following \eqref{eq::same_cls}, the first term of \eqref{eq::diff_cls} is bounded by
	\begin{align}
		&\mathbb{E}_{\bar{x}' \sim \rho_{c}} \mathbb{E}_{a',a''} f(a(\bar{x}))^\top [f(a'(\bar{x})) - f(a''(\bar{x}'))]
		\nonumber\\
		&\leq \mathbb{E}_{\bar{x}' \sim \rho_{c}} \mathbb{E}_{a'} \min_{a''} \|f(a'(\bar{x})) - f(a''(\bar{x}'))\|
		+  \mathbb{E}_{\bar{x}' \sim \rho_{c}} \max_{a',a''} \|f(a''(\bar{x}')) - f(a'(\bar{x}'))\|.
		\label{eq::diff1}
	\end{align}
	Besides, we decompose the second term of \eqref{eq::diff_cls} as follows.
	\begin{align}
		&\mathbb{E}_{\bar{x}' \sim \rho_{c}} \mathbb{E}_{\bar{x}_m \sim \rho_{i_m}} \mathbb{E}_{a'',a_m} f(a(\bar{x}))^\top [f(a''(\bar{x}')) - f(a_m(\bar{x}_m))]
		\nonumber\\
		&= \mathbb{E}_{\bar{x}' \sim \rho_{c}} \mathbb{E}_{\bar{x}_m \sim \rho_{i_m}} f(\bar{x})^\top [f(\bar{x}') - f(\bar{x}_m)]
		\nonumber\\
		&+ \mathbb{E}_{\bar{x}' \sim \rho_{c}} \mathbb{E}_{\bar{x}_m \sim \rho_{i_m}} \mathbb{E}_{a'',a_m} [f(a(\bar{x})) - f(\bar{x})]^\top [f(a''(\bar{x}')) - f(a_m(\bar{x}_m))]
		\nonumber\\
		&+ \mathbb{E}_{\bar{x}' \sim \rho_{c}} \mathbb{E}_{a''} f(\bar{x})^\top [f(a''(\bar{x}')) - f(\bar{x}')]
		+ \mathbb{E}_{\bar{x}_m \sim \rho_{i_m}} \mathbb{E}_{a_m} f(\bar{x})^\top [f(\bar{x}_m) - f(a_m(\bar{x}_m))]
		\nonumber\\
		&\leq \mathbb{E}_{\bar{x}' \sim \rho_{c}} \mathbb{E}_{\bar{x}_m \sim \rho_{i_m}} f(\bar{x})^\top [f(\bar{x}') - f(\bar{x}_m)]
		+ 2\|f(a(\bar{x})) - f(\bar{x})\|
		\nonumber\\
		&+ \mathbb{E}_{\bar{x}' \sim \rho_{c}} \mathbb{E}_{a''} \|f(a''(\bar{x}')) - f(\bar{x}')\|
		+ \mathbb{E}_{\bar{x}_m \sim \rho_{i_m}} \mathbb{E}_{a_m} \|f(\bar{x}_m) - f(a_m(\bar{x}_m))\|
		\nonumber\\
		&= \mathbb{E}_{\bar{x}' \sim \rho_{c}} \mathbb{E}_{\bar{x}_m \sim \rho_{i_m}} f(\bar{x})^\top [f(\bar{x}') - f(\bar{x}_m)]
		+ 2\|f(a(\bar{x})) - f(\bar{x})\|
		\nonumber\\
		&+ \mathbb{E}_{\bar{x}' \sim \rho_{c}} \mathbb{E}_{a''} \|f(a''(\bar{x}')) - f(\mathrm{Id}(\bar{x}'))\|
		+ \mathbb{E}_{\bar{x}_m \sim \rho_{i_m}} \mathbb{E}_{a_m} \|f(\mathrm{Id}(\bar{x}_m)) - f(a_m(\bar{x}_m))\|
		\nonumber\\
		&\leq \mathbb{E}_{\bar{x}' \sim \rho_{c}} \mathbb{E}_{\bar{x}_m \sim \rho_{i_m}} f(\bar{x})^\top [f(\bar{x}') - f(\bar{x}_m)]
		+ 2\|f(a(\bar{x})) - f(\bar{x})\|
		\nonumber\\
		&+ \mathbb{E}_{\bar{x}' \sim \rho_{c}} \max_{a',a''} \|f(a''(\bar{x}')) - f(a'(\bar{x}'))\|
		+ \mathbb{E}_{\bar{x}_m \sim \rho_{i_m}} \max_{a',a''} \|f(a''(\bar{x}_m)) - f(a'(\bar{x}_m))\|,
		\label{eq::diff2}
	\end{align}
	where the last inequality holds if $\mathrm{Id}(\cdot) \in \{a:a\sim \mathcal{A}\}$.
	Combining \eqref{eq::diff1} and \eqref{eq::diff2}, we have the different-class terms bounded by
	\begin{align}
		&\mathbb{E}_{\bar{x}_m \sim \rho_{i_m}} \mathbb{E}_{a',a_m} f(a(\bar{x}))^\top [f(a'(\bar{x})) - f(a_m(\bar{x}_m))] 
		\nonumber\\
		&\leq \mathbb{E}_{\bar{x}' \sim \rho_{c}} \mathbb{E}_{a'} \min_{a''} \|f(a'(\bar{x})) - f(a''(\bar{x}'))\|
		+  \mathbb{E}_{\bar{x}' \sim \rho_{c}} \max_{a',a''} \|f(a''(\bar{x}')) - f(a'(\bar{x}'))\|
		\nonumber\\
		&+ \mathbb{E}_{\bar{x}' \sim \rho_{c}} \mathbb{E}_{\bar{x}_m \sim \rho_{i_m}} f(\bar{x})^\top [f(\bar{x}') - f(\bar{x}_m)]
		+ 2\|f(a(\bar{x})) - f(\bar{x})\|
		\nonumber\\
		&+ \mathbb{E}_{\bar{x}' \sim \rho_{c}} \max_{a',a''} \|f(a''(\bar{x}')) - f(a'(\bar{x}'))\|
		+ \mathbb{E}_{\bar{x}_m \sim \rho_{i_m}} \max_{a',a''} \|f(a''(\bar{x}_m)) - f(a'(\bar{x}_m))\|
		\nonumber\\
		&= \mathbb{E}_{\bar{x}' \sim \rho_{c}} \mathbb{E}_{\bar{x}_m \sim \rho_{i_m}} f(\bar{x})^\top [f(\bar{x}') - f(\bar{x}_m)]
		+ \mathbb{E}_{\bar{x}' \sim \rho_{c}} \mathbb{E}_{a'} \min_{a''} \|f(a'(\bar{x})) - f(a''(\bar{x}'))\|
		\nonumber\\
		&+ 2\|f(a(\bar{x})) - f(\bar{x})\| 
		+ 2 \mathbb{E}_{\bar{x}' \sim \rho_{c}} \max_{a',a''} \|f(a''(\bar{x}')) - f(a'(\bar{x}'))\|
		+ \mathbb{E}_{\bar{x}_m \sim \rho_{i_m}} \max_{a',a''} \|f(a''(\bar{x}_m)) - f(a'(\bar{x}_m))\|.
		\label{eq::diff_cls_b}
	\end{align}
	Then plugging \eqref{eq::same_cls} and \eqref{eq::diff_cls_b} into \eqref{eq::rk_decomp}, we have
	\begin{align}
		&r_k(i_1,\ldots,i_k) 
		\nonumber\\
		&\geq \log \Big(1+\sum_{m=1}^k \exp\big(-[\mathbb{E}_{\bar{x}' \sim \rho_{c}} \mathbb{E}_{\bar{x}_m \sim \rho_{i_m}} f(\bar{x})^\top [f(\bar{x}') - f(\bar{x}_m)]
		+ \mathbb{E}_{\bar{x}' \sim \rho_{c}} \mathbb{E}_{a'} \min_{a''} \|f(a'(\bar{x})) - f(a''(\bar{x}'))\|
		\nonumber\\
		&+ 2\|f(a(\bar{x})) - f(\bar{x})\| 
		+ 2 \mathbb{E}_{\bar{x}' \sim \rho_{c}} \max_{a',a''} \|f(a''(\bar{x}')) - f(a'(\bar{x}'))\|
		+ \mathbb{E}_{\bar{x}_m \sim \rho_{i_m}} \max_{a',a''} \|f(a''(\bar{x}_m)) - f(a'(\bar{x}_m))\|] \big)
		\nonumber\\
		&+(K-k) \exp\big(-[\mathbb{E}_{\bar{x}_m \sim \rho_{c}} \mathbb{E}_{a'} \min_{a''} \|f(a'(\bar{x})) - f(a''(\bar{x}_m))\|
		+  \mathbb{E}_{\bar{x}_m \sim \rho_{c}} \max_{a',a''} \|f(a''(\bar{x}_m)) - f(a'(\bar{x}_m))\|]\big)\Big)
		\nonumber\\
		&\geq \log \Big(1+\sum_{m=1}^k \exp\big(-\mathbb{E}_{\bar{x}' \sim \rho_{c}} \mathbb{E}_{\bar{x}_m \sim \rho_{i_m}} f(\bar{x})^\top [f(\bar{x}') - f(\bar{x}_m)]\big)
		\cdot \exp\big(-[\mathbb{E}_{\bar{x}' \sim \rho_{c}} \mathbb{E}_{a'} \min_{a''} \|f(a'(\bar{x})) - f(a''(\bar{x}'))\|
		\nonumber\\
		&+ 2\|f(a(\bar{x})) - f(\bar{x})\| 
		+ 2 \mathbb{E}_{\bar{x}' \sim \rho_{c}} \max_{a',a''} \|f(a''(\bar{x}')) - f(a'(\bar{x}'))\|
		+ \mathbb{E}_{\bar{x}_m \sim \rho_{i_m}} \max_{a',a''} \|f(a''(\bar{x}_m)) - f(a'(\bar{x}_m))\|] \big)
		\nonumber\\
		&+(K-k) \exp\big(-[\mathbb{E}_{\bar{x}' \sim \rho_{c}} \mathbb{E}_{a'} \min_{a''} \|f(a'(\bar{x})) - f(a''(\bar{x}'))\|
		+  \mathbb{E}_{\bar{x}' \sim \rho_{c}} \max_{a',a''} \|f(a''(\bar{x}')) - f(a'(\bar{x}'))\|]\big)\Big)
		\nonumber\\
		&= \log \Big(1+\big[(K-k) + \sum_{m=1}^k \exp\big(-\mathbb{E}_{\bar{x}' \sim \rho_{c}} \mathbb{E}_{\bar{x}_m \sim \rho_{i_m}} f(\bar{x})^\top [f(\bar{x}') - f(\bar{x}_m)]\big) \cdot \exp\big(-[ 2\|f(a(\bar{x})) - f(\bar{x})\| 
		\nonumber\\
		&\qquad\qquad\qquad\qquad\qquad\qquad
		+ \mathbb{E}_{\bar{x}' \sim \rho_{c}} \max_{a',a''} \|f(a''(\bar{x}')) - f(a'(\bar{x}'))\|
		+ \mathbb{E}_{\bar{x}_m \sim \rho_{i_m}} \max_{a',a''} \|f(a''(\bar{x}_m)) - f(a'(\bar{x}_m))\|] \big)\big]
		\nonumber\\
		&\qquad\qquad\qquad \cdot \exp\big(-[\mathbb{E}_{\bar{x}' \sim \rho_{c}} \mathbb{E}_{a'} \min_{a''} \|f(a'(\bar{x})) - f(a''(\bar{x}'))\|
		+  \mathbb{E}_{\bar{x}' \sim \rho_{c}} \max_{a',a''} \|f(a''(\bar{x}')) - f(a'(\bar{x}'))\|]\big)\Big)
		\nonumber\\
		&\geq \log \Big(1+\big[(K-k) + \sum_{m=1}^k \exp\big(-\mathbb{E}_{\bar{x}' \sim \rho_{c}} \mathbb{E}_{\bar{x}_m \sim \rho_{i_m}} f(\bar{x})^\top [f(\bar{x}') - f(\bar{x}_m)]\big) \cdot \exp\big(-[ 2\|f(a(\bar{x})) - f(\bar{x})\| 
		\nonumber\\
		&\qquad\qquad\qquad\qquad\qquad\qquad
		+ \mathbb{E}_{\bar{x}' \sim \rho_{c}} \max_{a',a''} \|f(a''(\bar{x}')) - f(a'(\bar{x}'))\|
		+ \mathbb{E}_{\bar{x}_m \sim \rho_{i_m}} \max_{a',a''} \|f(a''(\bar{x}_m)) - f(a'(\bar{x}_m))\|] \big)\big]\Big)
		\nonumber\\
		&-[\mathbb{E}_{\bar{x}' \sim \rho_{c}} \mathbb{E}_{a'} \min_{a''} \|f(a'(\bar{x})) - f(a''(\bar{x}'))\|
		+  \mathbb{E}_{\bar{x}' \sim \rho_{c}} \max_{a',a''} \|f(a''(\bar{x}')) - f(a'(\bar{x}'))\|]
		\nonumber\\
		&\geq \log \Big(1+\big[(K-k) + \sum_{m=1}^k \exp\big(-\mathbb{E}_{\bar{x}' \sim \rho_{c}} \mathbb{E}_{\bar{x}_m \sim \rho_{i_m}} f(\bar{x})^\top [f(\bar{x}') - f(\bar{x}_m)]\big) \big]\Big)
		\nonumber\\
		&-[ 2\|f(a(\bar{x})) - f(\bar{x})\| 
		+ \mathbb{E}_{\bar{x}' \sim \rho_{c}} \max_{a',a''} \|f(a''(\bar{x}')) - f(a'(\bar{x}'))\|
		+ \mathbb{E}_{\bar{x}_m \sim \rho_{i_m}} \max_{a',a''} \|f(a''(\bar{x}_m)) - f(a'(\bar{x}_m))\|]
		\nonumber\\
		&= \log \Big(1+\big[(K-k) + \sum_{m=1}^k \exp\big(-\mathbb{E}_{\bar{x}' \sim \rho_{c}} \mathbb{E}_{\bar{x}_m \sim \rho_{i_m}} f(\bar{x})^\top [f(\bar{x}') - f(\bar{x}_m)]\big) \big]\Big)
		\nonumber\\
		&-[2\|f(a(\bar{x})) - f(\bar{x})\| 
		+ 2\mathbb{E}_{\bar{x}' \sim \rho_{c}} \max_{a',a''} \|f(a''(\bar{x}')) - f(a'(\bar{x}'))\|
		+ \mathbb{E}_{\bar{x}_m \sim \rho_{i_m}} \max_{a',a''} \|f(a''(\bar{x}_m)) - f(a'(\bar{x}_m))\| 
		\nonumber\\
		&\qquad+ \mathbb{E}_{\bar{x}' \sim \rho_{c}} \mathbb{E}_{a'} \min_{a''} \|f(a'(\bar{x})) - f(a''(\bar{x}'))\|]
		\nonumber\\
		&:= r_k^{\mathrm{sup}}(i_1,\ldots,i_k) -[2\|f(a(\bar{x})) - f(\bar{x})\| 
		+ 2\mathbb{E}_{\bar{x}' \sim \rho_{c}} \max_{a',a''} \|f(a''(\bar{x}')) - f(a'(\bar{x}'))\|
		\nonumber\\
		&\qquad
		+ \mathbb{E}_{\bar{x}_m \sim \rho_{i_m}} \max_{a',a''} \|f(a''(\bar{x}_m)) - f(a'(\bar{x}_m))\|  
		+ \mathbb{E}_{\bar{x}' \sim \rho_{c}} \mathbb{E}_{a'} \min_{a''} \|f(a'(\bar{x})) - f(a''(\bar{x}'))\|].
	\end{align}
\end{proof}

\begin{proof}[Proof of Theorem \ref{thm::upper}]
	Combining Theorems \ref{thm::decomp} and \ref{thm::rk}, we have
	\begin{align}
		\mathcal{R}^{\mathrm{un}}(f) 
		&= \mathbb{E}_c\mathbb{E}_{\bar{x}\sim \rho_c} \mathbb{E}_{a} \sum_{i_1\neq c} \cdots \sum_{i_K\neq c} p_K(i_1,\ldots,i_K) r_K(i_1,\ldots,i_K)
		\nonumber\\
		&+ \mathbb{E}_c\mathbb{E}_{\bar{x}\sim \rho_c} \mathbb{E}_{a} \sum_{i_1\neq c} \cdots \sum_{i_{j-1} \neq c} \cdot \sum_{i_{j+1}\neq c} \cdots \sum_{i_K\neq c} p_{K-1}(i_1,\ldots,i_{j-1},i_{j+1}\ldots i_K) r_{K-1}(i_1,\ldots,i_{j-1},i_{j+1}\ldots i_K)
		\nonumber\\
		&\cdots
		\nonumber\\
		&+ \mathbb{E}_c\mathbb{E}_{\bar{x}\sim \rho_c} \mathbb{E}_{a} \sum_{i_j\neq c} p_1(i_j) r_1(i_j)
		+ \mathbb{E}_c\mathbb{E}_{\bar{x}\sim \rho_c} \mathbb{E}_{a} p_0r_0
		\nonumber\\
		&\geq \mathbb{E}_c\mathbb{E}_{\bar{x}\sim \rho_c} \sum_{i_1\neq c} \cdots \sum_{i_K\neq c} p_K(i_1,\ldots,i_K) r_K^{\mathrm{sup}}(i_1,\ldots,i_K)
		\nonumber\\
		&+ \mathbb{E}_c\mathbb{E}_{\bar{x}\sim \rho_c} \sum_{i_1\neq c} \cdots \sum_{i_{j-1} \neq c} \cdot \sum_{i_{j+1}\neq c} \cdots \sum_{i_K\neq c} p_{K-1}(i_1,\ldots,i_{j-1},i_{j+1}\ldots i_K) r_{K-1}^{\mathrm{sup}}(i_1,\ldots,i_{j-1},i_{j+1}\ldots i_K)
		\nonumber\\
		&\cdots
		\nonumber\\
		&+ \mathbb{E}_c\mathbb{E}_{\bar{x}\sim \rho_c} \sum_{i_j\neq c} p_1(i_j) r_1^{\mathrm{sup}}(i_j)
		+ \mathbb{E}_c\mathbb{E}_{\bar{x}\sim \rho_c} p_0r_0^{\mathrm{sup}}
		\nonumber\\
		&- \mathbb{E}_c\mathbb{E}_{\bar{x}\sim \rho_c} \mathbb{E}_{a} [2\|f(a(\bar{x})) - f(\bar{x})\| 
		+ 2\mathbb{E}_{\bar{x}' \sim \rho_{c}} \max_{a',a''} \|f(a''(\bar{x}')) - f(a'(\bar{x}'))\| + \mathbb{E}_{\bar{x}' \sim \rho_{c}} \mathbb{E}_{a'} \min_{a''} \|f(a'(\bar{x})) - f(a''(\bar{x}'))\|]
		\nonumber\\
		&- \mathbb{E}_c\sum_{k=0}^K\sum_{\{j_1,\ldots,j_K\} \in \mathrm{re}([K])} \sum_{i_1\neq c}\ldots\sum_{i_k\neq c} \mathrm{P}(c_{j_1}=i_1)\ldots\mathrm{P}(c_{j_k}=i_k)\mathrm{P}(c_{j_{k+1}}=c)\ldots\mathrm{P}(c_K=c) 
		\nonumber\\
		&\qquad\qquad\qquad\qquad\qquad\qquad\qquad\cdot \mathbb{E}_{\bar{x}_m \sim \rho_{i_m}} \max_{a',a''} \|f(a''(\bar{x}_m)) - f(a'(\bar{x}_m))\|,
	\end{align}
	Note that
	\begin{align}
		&\mathbb{E}_c\sum_{k=0}^K\sum_{\{j_1,\ldots,j_K\} \in \mathrm{re}([K])} \sum_{i_1\neq c}\ldots\sum_{i_k\neq c} \mathrm{P}(c_{j_1}=i_1)\ldots\mathrm{P}(c_{j_k}=i_k)\mathrm{P}(c_{j_{k+1}}=c)\ldots\mathrm{P}(c_K=c)  
		\nonumber\\
		&\qquad\qquad\qquad\qquad\qquad\qquad\qquad\cdot \max_{a',a''} \mathbb{E}_{\bar{x}_m \sim \rho_{i_m}} \|f(a''(\bar{x}_m)) - f(a'(\bar{x}_m))\|
		\nonumber\\
		&= \mathbb{E}_c\sum_{k=0}^K \sum_{\{j_1,\ldots,j_K\} \in \mathrm{re}([K])} \sum_{i_m\neq c} \mathrm{P}(c_{j_m}=i_m) \cdot \max_{a',a''} \mathbb{E}_{\bar{x}_m \sim \rho_{i_m}} \|f(a''(\bar{x}_m)) - f(a'(\bar{x}_m))\|
		\nonumber\\
		&\qquad\qquad\qquad\qquad\qquad\qquad\cdot \sum_{i_1\neq c}\ldots\sum_{i_k\neq c}  \mathrm{P}(c_{j_1}=i_1)\ldots\mathrm{P}(c_{j_k}=i_k)\mathrm{P}(c_{j_{k+1}}=c)\ldots\mathrm{P}(c_K=c)  
		\nonumber\\
		&= \mathbb{E}_c\mathbb{E}_{c' \neq c}
		\max_{a',a''} \mathbb{E}_{\bar{x}_m \sim \rho_{c'}} \|f(a''(\bar{x}_m)) - f(a'(\bar{x}_m))
		\|\nonumber\\
		&\qquad\quad\cdot \sum_{k=0}^m \sum_{\{j_1,\ldots,j_K\} \in \mathrm{re}([K])} \mathrm{P}(c_{j_1} \neq c) \ldots \mathrm{P}(c_{j_k}\neq c) \mathrm{P}(c_{j_{k+1}}=c)\ldots\mathrm{P}(c_K=c) 
		\nonumber\\
		&\leq \mathbb{E}_c\mathbb{E}_{c' \neq c}
		\max_{a',a''} \mathbb{E}_{\bar{x}_m \sim \rho_{c'}} \|f(a''(\bar{x}_m)) - f(a'(\bar{x}_m))\|
		\nonumber\\
		&= \mathbb{E}_c\mathbb{E}_{c' \neq c}
		\mathbb{E}_{\bar{x}' \sim \rho_{c'}} \max_{a',a''} \|f(a''(\bar{x}')) - f(a'(\bar{x}'))\|.
	\end{align}
	Then by Corollary \ref{cor::decomp}, we have
	\begin{align}
		\mathcal{R}^{\mathrm{un}}(f) 
		&\geq \bar{\mathcal{R}}^{\mathrm{sup}}(f) 
		- \mathbb{E}_c\mathbb{E}_{\bar{x}\sim \rho_c} \mathbb{E}_{a} [\mathbb{E}_{\bar{x}' \sim \rho_{c}} \mathbb{E}_{a'} \min_{a''} \|f(a'(\bar{x})) - f(a''(\bar{x}'))\| + \mathbb{E}_{\bar{x}' \sim \rho_{c}} \max_{a',a''} \|f(a''(\bar{x}')) - f(a'(\bar{x}'))\|]
		\nonumber\\
		&-\mathbb{E}_c\mathbb{E}_{\bar{x}\sim \rho_c} \mathbb{E}_{a} \mathrm{P}(c_m\neq c) \cdot [2\mathbb{E}_{\bar{x}\sim \rho_c} \mathbb{E}_{a} \|f(a(\bar{x})) - f(\bar{x})\| + \mathbb{E}_{\bar{x}' \sim \rho_{c}} \max_{a',a''} \|f(a''(\bar{x}')) - f(a'(\bar{x}'))\|]
		\nonumber\\
		&- \mathbb{E}_c\mathbb{E}_{\bar{x}\sim \rho_c} \mathbb{E}_{a} \mathbb{E}_{c' \neq c}
		\mathbb{E}_{\bar{x}' \sim \rho_{c'}} \max_{a',a''} \|f(a''(\bar{x}')) - f(a'(\bar{x}'))\|
		\nonumber\\
		&= \bar{\mathcal{R}}^{\mathrm{sup}}(f) 
		- \mathbb{E}_c \mathbb{E}_{\bar{x}\sim \rho_c} \mathbb{E}_{\bar{x}' \sim \rho_{c}} \mathbb{E}_{a'} \min_{a''} \|f(a'(\bar{x})) - f(a''(\bar{x}'))\| 
		- \mathbb{E}_c\mathbb{E}_{\bar{x} \sim \rho_{c}} \max_{a',a''} \|f(a''(\bar{x})) - f(a'(\bar{x}))\|
		\nonumber\\
		&- 2\mathbb{E}_c\mathrm{P}(c'\neq c) \cdot \mathbb{E}_{\bar{x}\sim \rho_c} \mathbb{E}_{a} \|f(a(\bar{x})) - f(\bar{x})\| 
		- \mathbb{E}_c\mathbb{E}_{\bar{x} \sim \rho_{c}} \max_{a',a''} \|f(a''(\bar{x})) - f(a'(\bar{x}))\| 
		\nonumber\\
		&- \mathbb{E}_{c'}\mathbb{E}_{c \neq c'}
		\mathbb{E}_{\bar{x}' \sim \rho_{c'}} \max_{a',a''} \|f(a''(\bar{x}')) - f(a'(\bar{x}'))\|
		\nonumber\\
		&\geq \bar{\mathcal{R}}^{\mathrm{sup}}(f) 
		- \mathbb{E}_c \mathbb{E}_{\bar{x}\sim \rho_c} \mathbb{E}_{\bar{x}' \sim \rho_{c}} \mathbb{E}_{a} \min_{a'} \|f(a(\bar{x})) - f(a'(\bar{x}'))\| 
		- 5\mathbb{E}_c\mathbb{E}_{\bar{x} \sim \rho_{c}} \max_{a,a'} \|f(a(\bar{x})) - f(a'(\bar{x}))\|.
	\end{align}
\end{proof}

\begin{proof}[Proof of Theorem \ref{thm::main}]
	By combining Theorem \ref{thm::upper} and Lemma \ref{lem::Rsup}, we finish the proof.
\end{proof}

\subsection{Proof of the Improved Bound}
\begin{theorem}\label{thm::rk_improve}
	Under Assumption \ref{ass::center}, we have
	\begin{align}
		r_k(i_1,\ldots,r_k) 
		&\geq \log \Big(1+\sum_{m=1}^k \exp\big(-\mathbb{E}_{\bar{x}' \sim \rho_{c}} \mathbb{E}_{\bar{x}_m \sim \rho_{i_m}} f(a(\bar{x}))^\top [f(\bar{x}') - f(\bar{x}_m)]\big) + (K-k)\Big)
		\nonumber\\
		&-[\mathbb{E}_{\bar{x}' \sim \rho_{c}} \mathbb{E}_{a'} \min_{a''} \|f(a'(\bar{x})) - f(a''(\bar{x}'))\|
		+  \mathbb{E}_{\bar{x}' \sim \rho_{c}} \max_{a',a''} \|f(a''(\bar{x}')) - f(a'(\bar{x}'))\|].
	\end{align}
\end{theorem}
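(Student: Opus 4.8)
To prove Theorem~\ref{thm::rk_improve}, the plan is to re-run the argument behind Theorem~\ref{thm::rk}, but to use Assumption~\ref{ass::center} to replace the triangle-inequality detours there with exact identities; this is precisely what removes the term $2\|f(a(\bar{x}))-f(\bar{x})\|$, removes the negative-image augmentation-variance term, and halves the same-class variance term. Fix $\bar{x}$ in class $c$ and an augmentation $a$. As in \eqref{eq::rk_decomp}, since the logistic loss is a log-sum-exp and hence convex in the vector of inner products, Jensen's inequality (pulling all inner expectations over $\bar{x}_1,\dots,\bar{x}_K$, $a'$, and $\{a_m\}$ inside the exponentials) gives
\begin{align*}
	r_k(i_1,\dots,i_k)\ \ge\ \log\Big(1+\sum_{m=1}^k e^{-A_m}+\sum_{m=k+1}^K e^{-B_m}\Big),
\end{align*}
where $A_m=\mathbb{E}_{\bar{x}_m\sim\rho_{i_m}}\mathbb{E}_{a',a_m}f(a(\bar{x}))^\top[f(a'(\bar{x}))-f(a_m(\bar{x}_m))]$ collects the different-class negatives and $B_m=\mathbb{E}_{\bar{x}_m\sim\rho_c}\mathbb{E}_{a',a_m}f(a(\bar{x}))^\top[f(a'(\bar{x}))-f(a_m(\bar{x}_m))]$ the same-class ones.

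Next I would invoke Assumption~\ref{ass::center}: $\mathbb{E}_{a'}f(a'(\bar{x}))=f(\bar{x})$ and $\mathbb{E}_{a_m}f(a_m(\bar{x}_m))=f(\bar{x}_m)$, so $A_m=f(a(\bar{x}))^\top[f(\bar{x})-\mu_{i_m}]$ and $B_m=f(a(\bar{x}))^\top[f(\bar{x})-\mu_c]$ with $\mu_i=\mathbb{E}_{\bar{x}'\sim\rho_i}f(\bar{x}')$. The key estimate is then
\begin{align*}
	f(a(\bar{x}))^\top[f(\bar{x})-\mu_c]\ \le\ d:=\mathbb{E}_{\bar{x}'\sim\rho_c}\mathbb{E}_{a'}\min_{a''}\|f(a'(\bar{x}))-f(a''(\bar{x}'))\|+\mathbb{E}_{\bar{x}'\sim\rho_c}\max_{a',a''}\|f(a''(\bar{x}'))-f(a'(\bar{x}'))\|,
\end{align*}
which I would prove by writing $f(\bar{x})=\mathbb{E}_{a'}f(a'(\bar{x}))$ and $\mu_c=\mathbb{E}_{\bar{x}'\sim\rho_c}\mathbb{E}_{a''}f(a''(\bar{x}'))$ (again Assumption~\ref{ass::center}), so that $f(a(\bar{x}))^\top[f(\bar{x})-\mu_c]=\mathbb{E}_{\bar{x}'\sim\rho_c}\mathbb{E}_{a',a''}f(a(\bar{x}))^\top[f(a'(\bar{x}))-f(a''(\bar{x}'))]$; for each $(\bar{x}',a')$ I insert the augmentation $a''_\star$ attaining $\min_{a''}\|f(a'(\bar{x}))-f(a''(\bar{x}'))\|$, split $f(a'(\bar{x}))-f(a''(\bar{x}'))=[f(a'(\bar{x}))-f(a''_\star(\bar{x}'))]+[f(a''_\star(\bar{x}'))-f(a''(\bar{x}'))]$, and apply Cauchy--Schwarz together with $\|f(\cdot)\|=1$ (as in the proof of Theorem~\ref{thm::rk}) to bound the two pieces by the min- and max-distance terms, respectively. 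Since $A_m=B_m+f(a(\bar{x}))^\top[\mu_c-\mu_{i_m}]$, this single estimate yields both $B_m\le d$ and $A_m\le f(a(\bar{x}))^\top[\mu_c-\mu_{i_m}]+d$.

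Finally I would plug these back in, factor out the common $e^{-d}$, and pull $-d$ out of the logarithm via $\log(1+Ze^{-d})\ge\log(1+Z)-d$ (valid since $1\ge e^{-d}$ for $d\ge 0$):
\begin{align*}
	r_k(i_1,\dots,i_k)
	&\ \ge\ \log\Big(1+e^{-d}\Big[\sum_{m=1}^k e^{-f(a(\bar{x}))^\top[\mu_c-\mu_{i_m}]}+(K-k)\Big]\Big)\\
	&\ \ge\ \log\Big(1+\sum_{m=1}^k e^{-f(a(\bar{x}))^\top[\mu_c-\mu_{i_m}]}+(K-k)\Big)-d,
\end{align*}
and then rewrite $f(a(\bar{x}))^\top[\mu_c-\mu_{i_m}]=\mathbb{E}_{\bar{x}'\sim\rho_c}\mathbb{E}_{\bar{x}_m\sim\rho_{i_m}}f(a(\bar{x}))^\top[f(\bar{x}')-f(\bar{x}_m)]$, which is exactly the asserted inequality. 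The step needing care is the key estimate: the minimizing augmentation $a''_\star$ must be chosen \emph{pointwise} inside the expectation over $(\bar{x}',a')$, and one must check that the \emph{same} quantity $d$ dominates every same-class term $B_m$ so that a single factor $e^{-d}$ can be extracted from the log-sum-exp — everything else is bookkeeping on nested expectations, with Assumption~\ref{ass::center} doing the work that the triangle inequality did at extra cost in Theorem~\ref{thm::rk}.
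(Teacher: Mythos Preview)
Your proposal is correct and follows essentially the same route as the paper's proof: apply the Jensen step \eqref{eq::rk_decomp}, use Assumption~\ref{ass::center} to collapse the augmentation averages so that the different-class exponent becomes $\mathbb{E}_{\bar{x}'\sim\rho_c}\mathbb{E}_{\bar{x}_m\sim\rho_{i_m}} f(a(\bar{x}))^\top[f(\bar{x}')-f(\bar{x}_m)]$ plus the same residual $d$ that bounds the same-class exponent (the paper gets this via \eqref{eq::diff_cls} and \eqref{eq::diff1}/\eqref{eq::same_cls}, you get it by writing $A_m=B_m+f(a(\bar{x}))^\top[\mu_c-\mu_{i_m}]$), then factor out $e^{-d}$ and use $\log(1+Ze^{-d})\ge\log(1+Z)-d$. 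The only difference is organizational, not mathematical.
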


\begin{proof}[Proof of Theorem \ref{thm::rk_improve}]
	Under Assumption \ref{ass::center}, we have
	\begin{align}
		\mathbb{E}_{\bar{x}' \sim \rho_{c}} \mathbb{E}_{\bar{x}_m \sim \rho_{i_m}} \mathbb{E}_{a'',a_m} f(a(\bar{x}))^\top [f(a''(\bar{x}')) - f(a_m(\bar{x}_m))]
		= \mathbb{E}_{\bar{x}' \sim \rho_{c}} \mathbb{E}_{\bar{x}_m \sim \rho_{i_m}} f(a(\bar{x}))^\top [f(\bar{x}') - f(\bar{x}_m)].
	\end{align}
	Then by \eqref{eq::rk_decomp}, \eqref{eq::same_cls}, and \eqref{eq::diff_cls}, we have
	\begin{align}
		r_k(i_1,\ldots,r_k) 
		&\geq \log\Big(1 + \sum_{k=1}^{k}  \exp\big(-\mathbb{E}_{\bar{x}_m \sim \rho_{i_m}} \mathbb{E}_{a',a_m} f(a(\bar{x}))^\top [f(a'(\bar{x})) - f(a_m(\bar{x}_m))] \big)
		\nonumber\\
		&\qquad\qquad+ \sum_{m=k+1}^{K} \exp\big(-\mathbb{E}_{\bar{x}_m \sim \rho_{c}} \mathbb{E}_{a',a_m} f(a(\bar{x}))^\top [f(a'(\bar{x})) - f(a_m(\bar{x}_m))]\big)\Big)
		\nonumber\\
		\nonumber\\
		&\geq \log \Big(1+\sum_{m=1}^k \exp\big(-[\mathbb{E}_{\bar{x}' \sim \rho_{c}} \mathbb{E}_{\bar{x}_m \sim \rho_{i_m}} f(a(\bar{x}))^\top [f(\bar{x}') - f(\bar{x}_m)]
		\nonumber\\
		&+ \mathbb{E}_{\bar{x}' \sim \rho_{c}} \mathbb{E}_{a'} \min_{a''} \|f(a'(\bar{x})) - f(a''(\bar{x}'))\|
		+ \mathbb{E}_{\bar{x}' \sim \rho_{c}} \max_{a',a''} \|f(a''(\bar{x}')) - f(a'(\bar{x}'))\|]\big)
		\nonumber\\
		&+(K-k) \exp\big(-[\mathbb{E}_{\bar{x}' \sim \rho_{c}} \mathbb{E}_{a'} \min_{a''} \|f(a'(\bar{x})) - f(a''(\bar{x}'))\|
		+  \mathbb{E}_{\bar{x}' \sim \rho_{c}} \max_{a',a''} \|f(a''(\bar{x}')) - f(a'(\bar{x}'))\|]\big)\Big)
		\nonumber\\
		&= \log \Big(1+\Big[\sum_{m=1}^k \exp\big(-\mathbb{E}_{\bar{x}' \sim \rho_{c}} \mathbb{E}_{\bar{x}_m \sim \rho_{i_m}} f(a(\bar{x}))^\top [f(\bar{x}') - f(\bar{x}_m)]\big) + (K-k)\Big] 
		\nonumber\\
		&\qquad\qquad\quad\cdot \exp\big(-[\mathbb{E}_{\bar{x}' \sim \rho_{c}} \mathbb{E}_{a'} \min_{a''} \|f(a'(\bar{x})) - f(a''(\bar{x}'))\|
		+  \mathbb{E}_{\bar{x}' \sim \rho_{c}} \max_{a',a''} \|f(a''(\bar{x}')) - f(a'(\bar{x}'))\|]\big)\Big)
		\nonumber\\
		&\geq \log \Big(1+\sum_{m=1}^k \exp\big(-\mathbb{E}_{\bar{x}' \sim \rho_{c}} \mathbb{E}_{\bar{x}_m \sim \rho_{i_m}} f(a(\bar{x}))^\top [f(\bar{x}') - f(\bar{x}_m)]\big) + (K-k)\Big)
		\nonumber\\
		&-[\mathbb{E}_{\bar{x}' \sim \rho_{c}} \mathbb{E}_{a'} \min_{a''} \|f(a'(\bar{x})) - f(a''(\bar{x}'))\|
		+  \mathbb{E}_{\bar{x}' \sim \rho_{c}} \max_{a',a''} \|f(a''(\bar{x}')) - f(a'(\bar{x}'))\|].
	\end{align}
\end{proof}

\begin{proof}[Proof of Theorem \ref{thm::rk_improve}]
	Combining Theorem \ref{thm::decomp}, Corollary \ref{cor::decomp}, and Theorem \ref{thm::rk_improve}, we finish the conclusion.
\end{proof}

\subsection{Proof of the Generalization Bound}

\begin{proof}[Proof of Theorem \ref{thm::genbound}]
By combining Theorem \ref{thm::main} and Theorem 4.6 in \citet{lei2023generalization} with $G_2=1$ for the logistic loss in \eqref{eq::logistic}, we get the assertion.
\end{proof}

\section{Experimental Results of TinyImagenet}\label{app::tiny}

\begin{figure}[H]
	\centering
	\subfigure[Various crop size.]{
		\includegraphics[width=0.38 \linewidth]{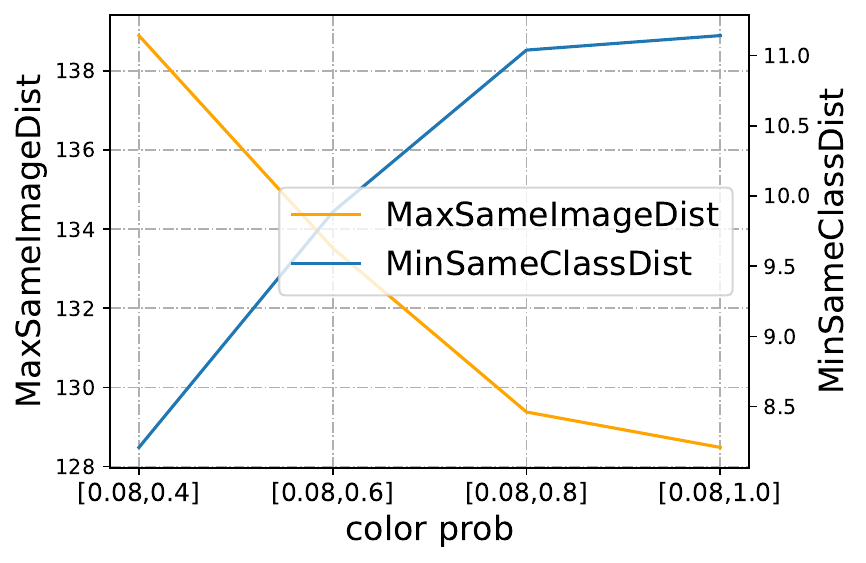}
		\label{fig::pixel_tiny_crop}
	}
	\subfigure[Various color probability.]{
		\includegraphics[width=0.38\linewidth]{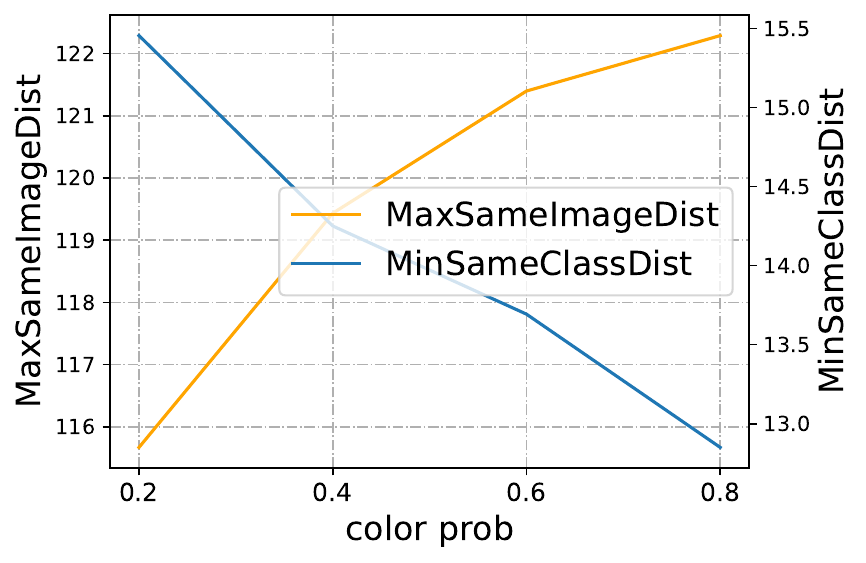}
		\label{fig::pixel_tiny_color}
	}
	\caption{Pixel-level maximum distance between same-class different-image augmentations and minimum distance between same-image data augmentations on TinyImagenet.}
	\label{fig::pixel_tiny}
\end{figure}

\begin{figure}[H]
    \centering
    \subfigure[Various crop size.]{
        \includegraphics[width=0.35\linewidth]{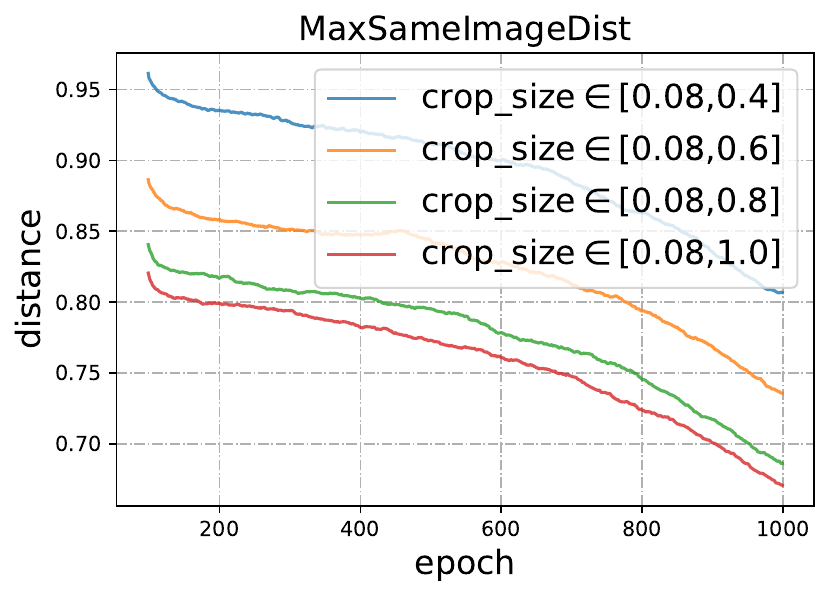}
        \label{fig::max_tiny_crop}
    }
    \subfigure[Various color probability.]{
        \includegraphics[width=0.35\linewidth]{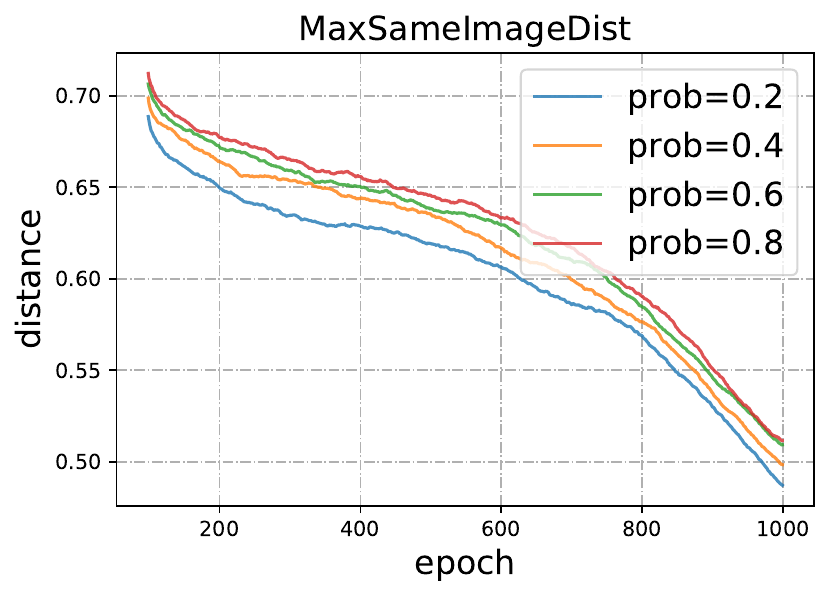}
        \label{fig::max_tiny_color}
    }
    \caption{Representation-level maximum distance between same-class different-image augmentations on TinyImagenet.}
    \label{fig::max_tiny}
\end{figure}

\begin{figure}[H]
    \centering
    \subfigure[Various crop size.]{
        \includegraphics[width=0.35\linewidth]{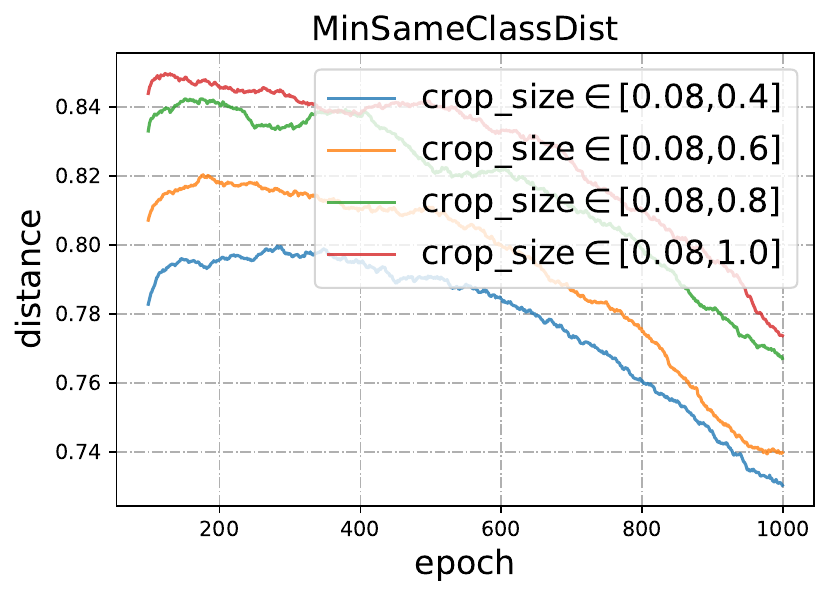}
        \label{fig::min_tiny_crop}
    }
    \subfigure[Various color probability.]{
        \includegraphics[width=0.35\linewidth]{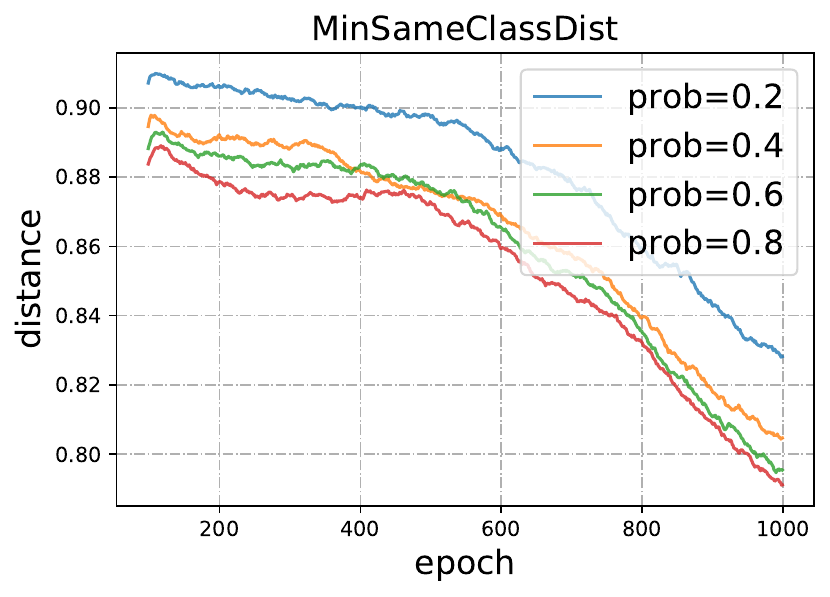}
        \label{fig::min_tiny_color}
    }
    \caption{Representation-level minimum distance between different same-image data augmentations on TinyImagenet.}
    \label{fig::min_tiny}
\end{figure}

\begin{figure}[H]
	\centering
	\subfigure[Various crop size.]{
		\includegraphics[width=0.35\linewidth]{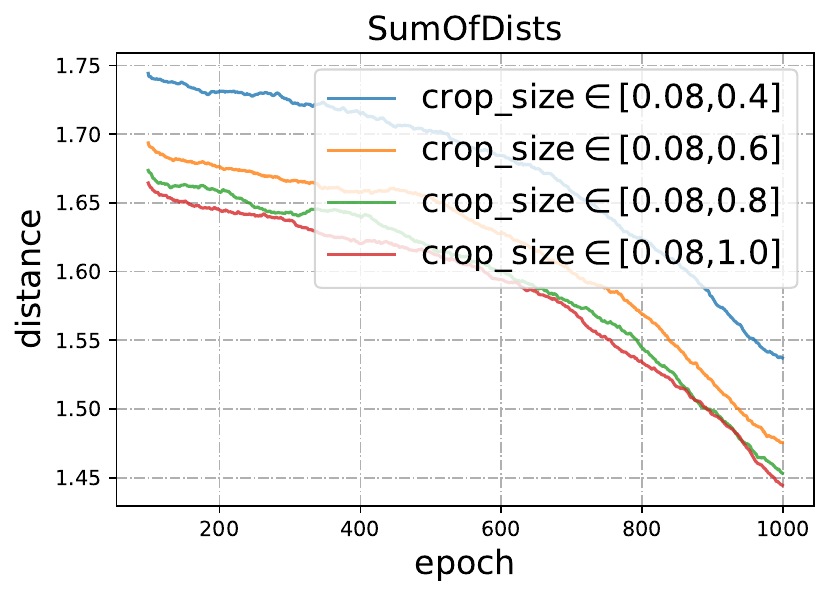}
		\label{fig::sum_tiny_crop}
	}
	\subfigure[Various color probability.]{
		\includegraphics[width=0.35\linewidth]{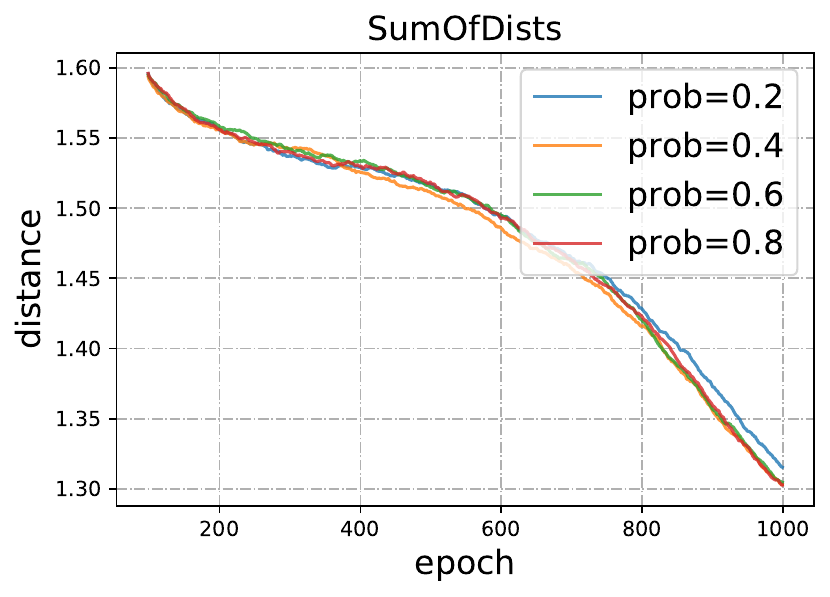}
		\label{fig::sum_tiny_color}
	}
	\caption{Sum of the two distance terms under various data augmentations in the embedding space on TinyImagenet.}
	\label{fig::sum_tiny}
\end{figure}

\begin{figure}[H]
	\centering
	\subfigure[Various crop size.]{
		\includegraphics[width=0.35\linewidth]{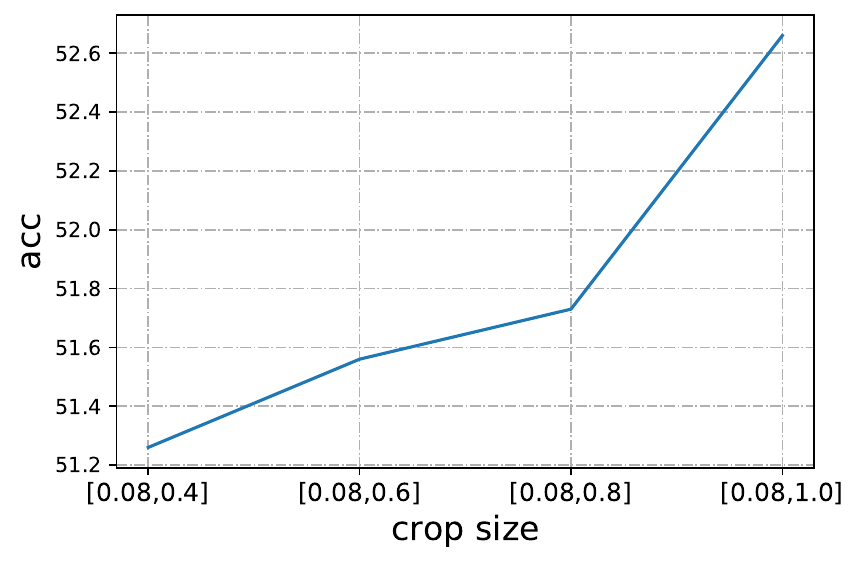}
		\label{fig::acc_tiny_crop}
	}
	\subfigure[Various color probability.]{
		\includegraphics[width=0.35\linewidth]{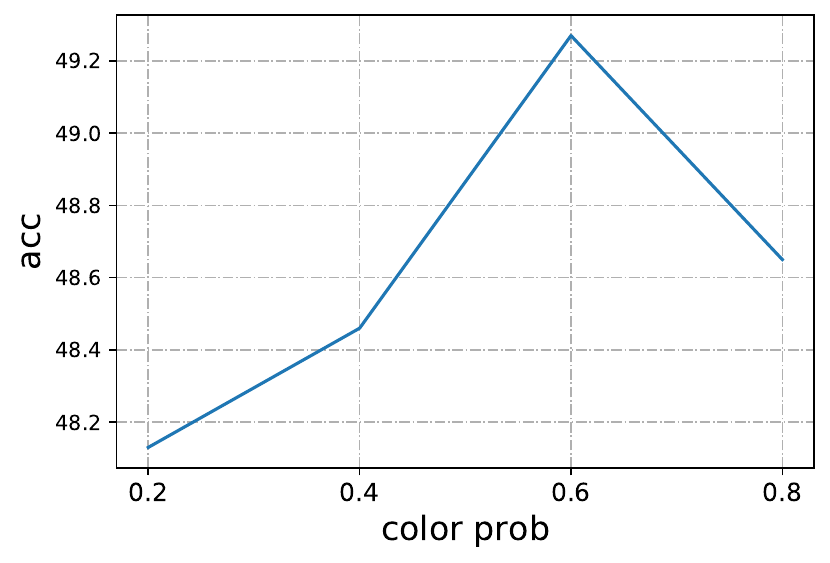}
		\label{fig::acc_tiny_color}
	}
	\caption{Linear probing accuracy under different data augmentation parameters on TinyImagenet.}
	\label{fig::acc_tiny}
\end{figure}

\end{document}